\documentclass[sigconf, nonacm]{acmart}

\newcommand\vldbavailabilityurl{https://github.com/minghaochen/selspec-csm}
\newcommand\vldbpagestyle{plain}

\usepackage{booktabs}
\usepackage{amsmath,amsthm}
\usepackage{algorithm}
\usepackage{algpseudocode}
\usepackage{tikz}
\usetikzlibrary{positioning,arrows.meta,fit,backgrounds}

\newtheorem{theorem}{Theorem}
\newtheorem{lemma}{Lemma}

\newtheorem{observation}{Observation}

\newcommand{\selspec}{\textsc{SelSpec}}
\newcommand{\gate}{\textsc{AnchorGate}}
\newcommand{\ball}[2]{B_{#1}(#2)}
\newcommand{\lap}{L}
\newcommand{\eig}[1]{\lambda_{#1}}

\begin{document}
\title{Can Aggregate Invariants Accelerate Continuous Subgraph Matching? Limits, Laws, and a Dynamic Spectral Index}

\author{Minghao Chen}
\affiliation{%
  \institution{Tencent Technology}
  \city{Shenzhen}
  \country{China}
}
\email{monychen@tencent.com}

\author{Jiale Zheng}
\affiliation{%
  \institution{HUAWEI Noah's Ark Lab}
  \city{Shenzhen}
  \country{China}
}
\email{zhengjiale2@huawei.com}

\begin{abstract}
Spectral filtering recently delivered substantial pruning for
\emph{static} subgraph matching: Laplacian interlacing rejects
candidates whose neighborhoods cannot host the query. We study whether
such aggregate structural tests can accelerate \emph{continuous}
subgraph matching (CSM) over dynamic graphs, and answer in three parts.
First, lazily maintained spectral bounds are infeasible exactly where
spectral pruning has value: we characterize the tightest safe rule over
a formalized perturbation relaxation and show that even it loses
essentially all pruning power within four touching updates.
Second, exact maintenance is affordable when selective: pruning utility
and recomputation cost are anti-correlated across vertices---hubs
provably never prune---so recomputing small-neighborhood spectra on
touch sustains exact local spectra at microseconds per update, complete
by construction.
Third, integrated into a decoupled CSM benchmark against an
identical-minus-spectra control, the tests remove up to $51\%$ of
candidates or safely skip up to $47\%$ of update enumerations, yet
enumeration intermediates remain unchanged---beyond the gates' skipped first-level bindings, typically zero---across two engines,
four real graphs, two stream types, and $77$ solved queries; a
constructed radius-stratified
workload confirms the instrument detects the exception when one exists
($-99.9\%$ intermediates, $748\times$ faster). Aggregate tests
accelerate what scales with candidate sets---construction, list
scans---never adjacency-guided exploration. We distill an
intermediate-invariance methodology for evaluating CSM filters and
release a reusable dynamic local-spectra index.
\end{abstract}

\maketitle

\pagestyle{\vldbpagestyle}

\ifdefempty{\vldbavailabilityurl}{}{
\vspace{.3cm}
\begingroup\small\noindent\raggedright\textbf{PVLDB Artifact Availability:}\\
The source code, data, and/or other artifacts have been made available at \url{\vldbavailabilityurl}.
\endgroup
}

\section{Introduction}
\label{sec:intro}

Continuous subgraph matching (CSM) monitors all embeddings of a query
graph $Q$ in a data graph $G$ that receives a stream of edge insertions
and deletions, reporting the matches created or destroyed by each
update~\cite{turboflux,symbi,calig,newsp}. Its applications---intrusion
detection, fraud-ring monitoring, real-time recommendation---share two
characteristics: updates arrive at high rates, and queries of interest
are large and structurally complex. Published CSM indexes filter with
labels, degrees, and neighborhood label multisets, refined by
consistency propagation over candidate adjacency (edge views, dynamic
candidate spaces)~\cite{csmsurvey,sun2022indepth,csmexperiments};
recent work adds synopsis-based dominance embeddings~\cite{divine}. What none
uses is an \emph{aggregate structural invariant}---spectra, closed-walk
counts---of candidate neighborhoods; and notably, consistency
propagation operates on exactly the adjacency structure that, as we
will show, carries enumeration cost, which is part of why aggregate
tests find no purchase. When labels are uninformative or queries grow
large (PILOS reports label-based filters failing wholesale beyond
roughly sixteen query vertices~\cite{pilos}), the label-based signals
collapse and enumeration explodes.

Static subgraph matching recently found a richer signal.
PILOS~\cite{pilos} indexes, for every data vertex, the top eigenvalues
of the Laplacian of its $h$-hop neighborhood, and prunes a candidate
whenever the query neighborhood's spectrum cannot \emph{interlace} into
the candidate's: a subgraph's padded Laplacian spectrum is dominated
position-wise by its supergraph's, so the test is one-sided safe.
On low-label graphs and large queries this removes roughly a quarter of
candidates and solves query sizes at which all label-based methods time
out. Spectra encode neighborhood-global structure---connectivity mass,
expansion, cycle content---that no label or degree filter sees.

The obvious research program, and the one this paper set out to execute,
is to bring spectral filtering to CSM. Two designs suggest themselves
immediately. \emph{Lazy bounds}: a single edge update perturbs a
neighborhood Laplacian by a rank-one term of norm at most~2, so Weyl-type
inequalities maintain certified eigenvalue intervals at $O(1)$ per
update, refreshed occasionally. \emph{Exact recomputation}: neighborhood
spectra are recomputed whenever touched. Both preserve the one-sided
safety of interlacing, hence completeness. The questions are whether
either is affordable, and whether the resulting pruning helps.

This paper answers both questions completely, and the answers are not
what we expected when we began. Our study proceeded through three
experiment-driven reversals, each of which reshaped the design; we
report all three because the intermediate failures carry as much
information as the final system.

\paragraph{Reversal 1: lazy bounds are impossible where they matter.}
Pruning margins---the gaps $\sum_t \max(0,\eig{t}(Q)-\eig{t}(v))$
separating query from candidate spectra---are $O(1)$ in exactly the
sparse-neighborhood regime where spectral pruning fires, while each
touching edge injects eigenvalue mass~2. We prove an optimality result
over a formalized perturbation relaxation: the tightest safe rule
observing only the stale spectrum and an insert count is the
disjunction of a trace-budget test (via majorization) with composed
interlacing-shift ceilings; and we show empirically that even this
optimal rule retains under $60\%$ of pruning power after one touching
update and essentially none after four (\S\ref{sec:maintenance}).
The entire lazy-bound family is thereby closed off.

\paragraph{Reversal 2: exactness is affordable---selectively.}
The same sparsity that kills lazy bounds makes exact recomputation cheap:
a 32-vertex neighborhood Laplacian solve costs tens of microseconds.
Moreover, pruning utility and maintenance cost are
\emph{anti-correlated} across vertices: we prove that any vertex whose
$h$-hop ball contains a vertex of degree $\ge |V(Q)|$ can never be
pruned on its leading eigenvalue---hubs are spectrally unprunable---and
measure that vertices with balls above 64 nodes never prune at all,
while carrying the largest solve costs. A \emph{selective exact} index
(\selspec) that deploys spectra only on small-ball, label-relevant
vertices and recomputes on touch retains $96\!-\!100\%$ of pruning
benefit at $\sim\!300\mu s$ per update, completeness holding by
construction (\S\ref{sec:selspec}).

\paragraph{Reversal 3: the pruning does not transfer.}
We integrated \selspec\ into a state-of-the-art decoupled CSM evaluation
framework~\cite{csmexperiments} as a drop-in index layer, against the
strictest possible control: the identical pipeline minus the spectral
test. Candidates fall by $9\!-\!51\%$, growing with query size as in the
static setting; the number of incremental matches is preserved exactly.
Yet across three query scales and two label regimes, the count of
enumeration intermediates is \emph{exactly equal} with and without
spectral filtering: every vertex the spectral test removes is one the
adjacency-guided delta enumeration never visits. We then designed a
CSM-native mechanism unavailable to static matching---an \emph{anchored
gate} that skips a whole delta enumeration when the sub-query within
radius $\rho$ of the matched query edge cannot embed in the ball around
the updated data edge---and extended it beyond spectra to triangle
counts, which are subgraph-monotone closed-walk invariants outside the
reach of degree information. The gates fire on $5\!-\!47\%$ of updates,
never wrongly; the intermediates remain identical a fourth and a fifth
time (\S\ref{sec:filtering},~\S\ref{sec:experiments}).

These five replications, together with a spectral-graph-theoretic
explanation, yield the central result of the paper:

\begin{quote}
\textbf{Aggregate-invariant boundary regularity.} In update-driven
CSM whose pipeline already saturates label/degree filtering,
subgraph-monotone aggregate tests---Laplacian interlacing, vertex and
edge counts, triangle counts---pruned only work that adjacency-guided
enumeration discarded within its first expansion levels, in every
natural workload we measured; the expensive failures were
injective-assignment-level, invisible to every aggregate.
\end{quote}

The Laplacian case is not an accident: by the Grone--Merris--Bai
theorem, the Laplacian spectrum is majorized by the conjugate degree
sequence, so top-eigenvalue interlacing is, up to majorization slack, a
degree-sequence test---and degree mass is precisely what CSM pipelines
already filter. The regularity also delimits its own scope: aggregate
pruning pays exactly where cost scales with candidate-set
size---static pipelines' candidate-space construction and the per-level
candidate-list scans of list-driven engines, which is where PILOS's
gains live and why they cannot transfer to delta enumeration, which has
no construction term; and consumers that are not adjacency-guided
(periodic batch re-enumeration, approximate-matching admission control)
remain valid
targets for the gate machinery we built.

\paragraph{Contributions.} In summary:
\begin{itemize}
  \item \textbf{Impossibility of lazy spectral maintenance}
  (\S\ref{sec:maintenance}): the disjunction of a trace-budget test and
  composed interlacing-shift ceilings is the tightest safe rule over a
  formalized perturbation relaxation of (stale spectrum, insert count),
  and even it collapses within $O(1)$ touches in the regime where
  spectral pruning fires.
  \item \textbf{A selective exact maintenance primitive}
  (\S\ref{sec:selspec}): an anti-correlation law and a hub-exclusion
  theorem justify deploying exact spectra only where they can prune;
  the resulting index maintains exact local spectra at microsecond
  update cost, independent of graph size, and is reusable for any
  dynamic-graph application consuming local spectra.
  \item \textbf{The aggregate-invariant boundary regularity}
  (\S\ref{sec:filtering}): replicated across the nine configurations of
  Table~\ref{tab:matrix} ($87$ co-solved queries) with zero
  completeness violations, explained via Grone--Merris--Bai and an
  aggregate-versus-assignment argument; plus \gate, a safe, novel
  update-level mechanism whose CSM behavior completes the regularity
  and whose validity persists for non-adjacency-guided consumers.
  \item \textbf{A methodology} (\S\ref{sec:experiments}): the
  intermediate-invariance test, which detects when candidate reductions
  are enumeration-irrelevant; we show candidate counts mislead in both
  directions, extending the caveat raised by recent CSM benchmarking
  work~\cite{csmexperiments}.
\end{itemize}

\section{Preliminaries}
\label{sec:prelim}

\subsection{Continuous Subgraph Matching}
A data graph $G=(V,E,\ell)$ and query graph $Q=(V_Q,E_Q,\ell)$ carry
vertex labels $\ell(\cdot)$. An \emph{embedding} is an injective mapping
$f:V_Q\to V$ preserving labels and edges. Given a stream of edge
insertions and deletions applied to $G$, CSM reports after each update
the set (or count) of embeddings created or destroyed. All competitive
CSM systems follow an \emph{index--enumerate} paradigm
\cite{turboflux,symbi,calig,newsp,csmexperiments}: an index maintains,
for each query vertex $u$, a candidate set $C(u)\subseteq V$
(and often candidate \emph{edge views} for query edges); each update
triggers a delta enumeration anchored at the updated edge, exploring
only candidates adjacent to the partial embedding under construction.
We call this exploration \emph{adjacency-guided}: a candidate
participates only if reachable from the anchor through candidate edges.
The standard vertex filters are LDF (label and degree) and NLF
(per-label neighbor counts); both are subgraph-monotone necessary
conditions.

\subsection{Neighborhood Spectra and Safe Pruning}
For $S\subseteq V$ let $G[S]$ denote the induced subgraph and $\lap(S)$
its combinatorial Laplacian. Let $\ball{h}{v}$ be the set of vertices
within distance $h$ of $v$, and let
$\eig{1}(S)\ge\eig{2}(S)\ge\cdots$ be the eigenvalues of $\lap(S)$,
padded with zeros beyond $|S|$. Two classical facts give one-sided
safety. First, adding edges adds a PSD term to the Laplacian, and
padding appends zeros, so if $H$ embeds into $G[S]$ as a subgraph then
$\eig{t}(H)\le\eig{t}(S)$ for all $t$. Second, embeddings do not
increase distances, so an embedding $f$ maps the query ball
$\ball{h}{u}\cap Q$ into $G[\ball{h}{f(u)}]$.

We write $\lambda^Q_t(u) := \eig{t}\!\big(Q[\ball{h}{u}\cap V_Q]\big)$
for the query-side ball spectrum at $u$ and
$\lambda^G_t(v) := \eig{t}\!\big(G[\ball{h}{v}]\big)$ for the data-side
ball spectrum at $v$.

\begin{lemma}[Interlacing safety~\cite{pilos}]
\label{lem:safety}
If $f$ is an embedding with $f(u)=v$, then
$\lambda^Q_t(u) \le \lambda^G_t(v)$ for all $t$.
Hence pruning $v$ from $C(u)$ whenever
$\exists t:\ \lambda^Q_t(u) > \lambda^G_t(v)$
never removes a true match.
\end{lemma}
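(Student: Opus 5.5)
The plan is to factor the embedding into three steps: restrict $f$ to the query ball, realize its image as a spanning subgraph of the data-ball induced subgraph, and compare spectra through a relabeled, padded Laplacian. Set $S_Q := \ball{h}{u}\cap V_Q$ and $S_G := \ball{h}{v}$.

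\textbf{Step 1 (locality).} Restrict $f$ to $S_Q$. For $w\in S_Q$ there is a path of length at most $h$ from $u$ to $w$ in $Q$. Because $f$ preserves edges, that path maps to a walk of the same length from $v$ to $f(w)$ in $G$. Hence $d_G(v,f(w))\le h$, so $f(S_Q)\subseteq S_G$. This is the second classical fact quoted above.

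\textbf{Step 2 (subgraph structure).} Let $H$ be the graph on vertex set $f(S_Q)$ whose edges are the images $f(e)$ of the edges $e\in E_Q$ with both endpoints in $S_Q$. Since $f$ is injective, $H$ is isomorphic to $Q[S_Q]$, so the two have the same Laplacian spectrum. Every image edge lies in $E$ with both endpoints in $S_G$, so $H$ is a subgraph of $G[S_G]$. Note that $H$ need not be induced: $G$ may contain extra edges among the image vertices.

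\textbf{Step 3 (spectral comparison).} Index rows and columns by $S_G$ and write
\[
  \lap(S_G) = \tilde L_H + L_{\Delta},
\]
where $\tilde L_H$ is the Laplacian of $H$ extended by isolated vertices to all of $S_G$, and $L_\Delta$ is the Laplacian of the remaining edges of $G[S_G]$. Two observations finish the comparison.
\begin{itemize}
  \item Adding isolated vertices to $H$ only appends zero eigenvalues. So the spectrum of $\tilde L_H$ is that of $\lap(S_Q)$ padded with zeros, which matches the paper's zero-padding convention for the $\lambda_t$.
  \item $L_\Delta$ is positive semidefinite, being a sum of edge terms $(e_a-e_b)(e_a-e_b)^\top$.
\end{itemize}
By Weyl's monotonicity (equivalently, Courant--Fischer: $A\preceq B$ implies $\lambda_t(A)\le\lambda_t(B)$ for every $t$), we get $\lambda^Q_t(u)\le \lambda^G_t(v)$ for $t\le |S_G|$. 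For $t>|S_G|$ both sides are padded zeros; this uses $|S_Q|\le|S_G|$, which holds by injectivity.

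\textbf{Pruning corollary.} This is the contrapositive. If some $t$ has $\lambda^Q_t(u)>\lambda^G_t(v)$, then no embedding maps $u$ to $v$, so deleting $v$ from $C(u)$ loses no match.

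I expect no step to be a genuine obstacle. The only point needing care is the bookkeeping in Step 3: aligning the index sets via the injection and handling the zero padding. In particular, one must check that the padded spectrum of a graph with isolated vertices added equals the zero-padded spectrum of the original graph; this holds because the added isolated vertices form a direct sum with a zero block.
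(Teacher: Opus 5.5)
Your proof is correct and takes the same route the paper relies on: the preliminaries justify the lemma by two facts, namely that distance non-expansion maps the query ball into $\ball{h}{v}$, and that an isomorphic copy of $Q[\ball{h}{u}\cap V_Q]$ sitting inside $G[\ball{h}{v}]$ as a (not necessarily induced) subgraph differs from $\lap(\ball{h}{v})$ by a PSD term after padding, so Weyl monotonicity gives the componentwise domination. Your Step 3 bookkeeping additionally spells out what the paper leaves implicit: the zero padding via the direct-sum zero block, and the use of $|S_Q|\le|S_G|$ for indices beyond $|S_G|$.
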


The same argument applies to any \emph{subgraph-monotone aggregate
invariant} $\phi$: if $H\subseteq G[S]$ implies $\phi(H)\le\phi(S)$,
then $\phi(Q\text{-side})>\phi(\text{data-side})$ is a safe pruning
test. Vertex and edge counts, all Laplacian eigenvalues, and closed-walk
counts $\mathrm{tr}(A^k)$ (e.g.\ $k{=}3$: six times the triangle count)
are all subgraph-monotone. This family is the object of our study.

\section{Lazy Spectral Bounds Are Infeasible}
\label{sec:maintenance}

A single edge insertion inside $\ball{h}{v}$ perturbs $\lap(\ball{h}{v})$
by $E=(e_a-e_b)(e_a-e_b)^{\!\top}$, PSD with $\|E\|_2=2$;
deletions subtract such a term. Weyl's inequalities~\cite{hornjohnson}
control the eigenvalue movement under such perturbations. The lazy program maintains certified
upper bounds $\bar\lambda$ on the data-side spectrum: deletions only
lower true eigenvalues (bounds stay valid for free), and insertions
are absorbed by cheap bound updates, with full recomputation deferred
until the bounds lose discriminating power. Pruning with upper bounds is
safe by Lemma~\ref{lem:safety}. The question is how fast the power decays.

\subsection{Three Bound Families and an Optimality Result}
\label{sec:lazyfamilies}

\paragraph{Flat Weyl.} $\bar\lambda_i \mathrel{+}= 2$ per inserted edge,
for every $i$. A subtlety with real systems: one edge update can pull a
new vertex carrying $k>1$ edges into the ball; accounting $+2$ per
\emph{update} rather than per \emph{induced edge} is unsafe (we observed
genuine violations), so the reverse index must track induced edge deltas.

\paragraph{Rank-one shift.} For one inserted edge,
$\eig{1}'\le\eig{1}+2$ but $\eig{i}'\le\eig{i-1}$ for $i\ge 2$
(Weyl with $\mu_2(E)=0$): the bound vector \emph{shifts} rather than
inflates, tightening the flat rule by ${\sim}40\%$ in our measurements.

\paragraph{Trace budget.} $k$ inserted edges add total eigenvalue mass
exactly $2k$, and each eigenvalue can only increase. For the stale
spectrum $d$ (descending) to come to dominate the query spectrum $q$,
the minimal mass needed is the majorization cost
$C(q,d)=\sum_t \max(0,\,q_t-d_t)$. Hence
\begin{equation}
\label{eq:trace}
\text{prune iff } C(q, d^{\text{stale}}) > 2k_{\text{ins}}
\end{equation}
is safe, costs $O(r)$, and dominates both per-component rules whenever
several components sit near their margins.

Neither per-component family subsumes the other, and neither does the
trace rule alone: the shift constraint can disprove domination when the
mass budget cannot (a large $q_i$ above the ceiling $d_{i-k}$), and
vice versa. To speak of optimality we must fix the constraint set.
Define the \emph{perturbation relaxation} $\mathcal{R}(d,k)$ as the set
of descending sequences $\lambda'$ satisfying the three families we can
prove for $k$ composed rank-one PSD insertions:
(M)~$\lambda'_i\ge d_i$ (monotonicity);
(S)~$\lambda'_i\le c_i :=
\min_{0\le j\le \min(k,\,i-1)}\big(d_{i-j}+2(k-j)\big)$, the lattice-path
minimum over all interleavings of the per-step alternatives ``interlace
one position'' or ``gain at most norm $2$''---mixed paths can dominate
both pure options when the spectrum decays steeply, so neither the pure
shift $d_{i-k}$ nor the pure norm bound is the minimizer in general;
(T)~$\sum_{i\le r}(\lambda'_i-d_i)\le 2k$ (trace budget).
Our measured rule composes the per-step pointwise minimum
$b'_1=b_1+2$, $b'_i=\min(b_{i-1},\,b_i+2)$ $k$ times, which is exactly
the dynamic program computing the ceilings $c_i$.
$\mathcal{R}$ ignores graph realizability and the finer interpolating
Weyl inequalities (whose perturbation eigenvalues are not observable
from $k$ alone), so it over-approximates the truly reachable spectra:
rules safe w.r.t.\ $\mathcal{R}$ are safe in reality.

\begin{theorem}[Optimal rule in the relaxation]
\label{thm:tight}
The rule ``prune iff $C(q,d^{\text{stale}}) > 2k_{\text{ins}}$
\textbf{or} $\exists i:\, q_i$ exceeds its ceiling in (S)'' is the
tightest safe rule over $\mathcal{R}(d^{\text{stale}},k_{\text{ins}})$:
it prunes exactly when no $\lambda'\in\mathcal{R}$ dominates $q$.
\end{theorem}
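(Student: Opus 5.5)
The plan is to prove the two directions of the equivalence separately. Throughout, ``$\lambda'$ dominates $q$'' means $\lambda'_i\ge q_i$ for every position $i\le r$, and $q$ and $d=d^{\text{stale}}$ are both descending and zero-padded, as in \S\ref{sec:prelim}. The soundness direction will be a direct contradiction with the constraints (M), (S), (T). The tightness direction will exhibit one explicit witness in $\mathcal{R}(d,k)$, namely the componentwise maximum $\lambda^\ast:=\max(q,d)$.

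\emph{Soundness.} Suppose the rule fires and, for contradiction, some $\lambda'\in\mathcal{R}(d,k)$ dominates $q$. There are two cases.
\begin{itemize}
\item If $q_i>c_i$ for some $i$, then $\lambda'_i\ge q_i>c_i$, which violates (S).
\item If $C(q,d)>2k$, then (M) and domination give $\lambda'_i\ge\max(q_i,d_i)$ for every $i$. Hence $\lambda'_i-d_i\ge\max(0,q_i-d_i)$, and summing over $i\le r$ yields $\sum_{i\le r}(\lambda'_i-d_i)\ge C(q,d)>2k$. This violates (T).
\end{itemize}
So whenever the rule prunes, no member of $\mathcal{R}$ dominates $q$. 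Since $\mathcal{R}$ over-approximates the truly reachable spectra, this also gives real-world safety, as argued just before the theorem.

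\emph{Tightness.} Suppose the rule does not fire, so $C(q,d)\le 2k$ and $q_i\le c_i$ for all $i$. I will check that $\lambda^\ast_i=\max(q_i,d_i)$ lies in $\mathcal{R}(d,k)$; it dominates $q$ trivially.
\begin{itemize}
\item $\lambda^\ast$ is descending, because the pointwise maximum of two descending sequences is descending.
\item (M) holds by construction.
\item (T) holds because $\sum_{i\le r}(\lambda^\ast_i-d_i)=\sum_{i\le r}\max(0,q_i-d_i)=C(q,d)\le 2k$.
\item For (S) we need $\max(q_i,d_i)\le c_i$. We already have $q_i\le c_i$, so it remains to show $d_i\le c_i$. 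Each term of the minimum defining $c_i$ satisfies $d_{i-j}+2(k-j)\ge d_i$, since $j\ge 0$ and $d$ is descending give $d_{i-j}\ge d_i$, and $j\le k$ gives $2(k-j)\ge 0$.
\end{itemize}
Thus $\lambda^\ast\in\mathcal{R}$ dominates $q$, so the rule's refusal to prune is forced: no safe rule over $\mathcal{R}$ may prune here. This establishes ``prunes exactly when no $\lambda'\in\mathcal{R}$ dominates $q$.''

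The only delicate step is confirming that the witness satisfies (S) at every index, i.e.\ that the ceilings never fall below the stale spectrum. The monotonicity argument for $d_i\le c_i$ settles this. I will also remark that the DP $b'_i=\min(b_{i-1},b_i+2)$ computes exactly these $c_i$ (stated in the text), so the measured rule coincides with the theorem's rule. Finally, I will note that the witness and the constraints only involve positions $i\le r$, so the zero padding beyond $|S|$ causes no boundary issues.
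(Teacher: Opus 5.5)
Your proof is correct and follows the paper's argument. You use the same witness $\lambda'_i=\max(q_i,d_i)$ and the same fact that every dominating sequence satisfying (M) must spend at least $C(q,d)$ trace mass; you also make explicit the check $d_i\le c_i$ (each term $d_{i-j}+2(k-j)\ge d_i$), which the paper's sketch uses silently when it says the witness satisfies (S) iff every $q_i$ respects its ceiling.
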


\begin{proof}[Proof sketch]
The pointwise-minimal candidate $\lambda'_i=\max(q_i,d_i)$ is descending
(maximum of two descending sequences), satisfies (M) by construction,
satisfies (S) iff every $q_i$ respects its ceiling, and consumes mass
exactly $C(q,d)$, the minimum any dominating sequence must spend under
(M). Hence a dominating member of $\mathcal{R}$ exists iff both
conditions hold.
\end{proof}

We state plainly what this does and does not claim: optimality is
relative to $\mathcal{R}$; a rule exploiting structure outside
$\mathcal{R}$ (e.g.\ graph realizability) could in principle be
tighter. The collapse below is measured for the optimal-in-$\mathcal{R}$
rule itself, and Observation~\ref{obs:tension} explains why any rule
whose admissible set inflates with injected trace mass meets the same
fate.

\subsection{Collapse}
\label{sec:collapse}

Theorem~\ref{thm:tight} makes the empirical question sharp: if even the
optimal-in-$\mathcal{R}$ rule fails, every rule in this information
class fails. On sparse graphs with dense queries---the regime where
exact interlacing prunes $7$--$27\%$ of candidates beyond NLF
(\S\ref{sec:experiments})---we measured retention, the fraction of
exact-spectrum prunes still made by each rule, as a function of the
number $\tau$ of touching updates since refresh
(Figure~\ref{fig:probe}b). Flat Weyl, the composed ceilings $c_i$, and
the optimal trace$\vee$ceiling rule retain $54$/$56$/$58\%$ at
$\tau{=}1$, $14$/$16$/$19\%$ at $\tau{=}2$, and nothing beyond
$\tau{=}4$.
The mechanism is structural: prune margins $C(q,d)$ concentrate between
1 and 8 in sparse neighborhoods, while each touching update injects mass
${\approx}5$ on average---margins and perturbations share a scale, so no
bound family can hold a useful invariant across even a handful of
updates. Safety was never violated (with correct edge accounting);
power simply evaporates.

\begin{observation}
\label{obs:tension}
The regimes are inseparable: spectral pruning requires sparse, small
neighborhoods (rich ones dominate every query spectrum), and sparse,
small neighborhoods are exactly where one edge is a large relative
perturbation. Lazy maintenance is infeasible precisely where it would
be useful.
\end{observation}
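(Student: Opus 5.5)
We would turn the observation into two quantitative claims and join them through the pruning margin $C(q,d)=\sum_t\max(0,q_t-d_t)$. Claim (A): pruning at $v$ for $u$ is possible only when $\ball{h}{v}$ is sparse and small relative to the query ball. Claim (B): in that regime, the $O(1)$ margin is used up by $O(1)$ insertions under any rule whose admissible set grows with injected trace mass. Together these give inseparability: the neighborhoods where pruning can happen are exactly the ones where one edge is a large relative perturbation.

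For (A) we argue the contrapositive: rich neighborhoods dominate every query spectrum, so pruning never fires there. Let $n_Q=|\ball{h}{u}\cap V_Q|$. Every query-side eigenvalue satisfies $q_t\le n_Q$, since Laplacian eigenvalues of an $n$-vertex graph are at most $n$. We would give two richness conditions.
\begin{itemize}
  \item \emph{Hub in the ball.} If $G[\ball{h}{v}]$ has a vertex of degree $\Delta\ge n_Q$, the standard bound $\eig{1}\ge\Delta+1$ gives $d_1>q_1$. This is the hub-exclusion argument previewed in the introduction, applied to the leading eigenvalue.
  \item \emph{Clique in the ball.} If $G[\ball{h}{v}]$ contains $K_m$ with $m\ge n_Q$, then by the monotonicity in Lemma~\ref{lem:safety} (a subgraph's padded spectrum is dominated) we get $d_t\ge \eig{t}(K_m)=m\ge q_t$ for all $t\le m-1$. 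The remaining indices $t\ge n_Q$ have $q_t=0$, so $d$ dominates $q$ entirely.
\end{itemize}
More generally, by Grone--Merris--Bai domination of $q$ is governed by degree mass. So we will state (A) as follows: a prune requires some $d_t<q_t\le n_Q$, hence the ball cannot carry degree or clique mass of order $n_Q$. This is the precise sense of ``sparse, small.'' Without committing to a full characterization, we would also note that $C(q,d)\le\sum_t q_t=2|E(Q[\ball{h}{u}])|$, so the margin is bounded by a query-only constant.

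For (B), each touching insertion adds exactly $2$ to $\mathrm{tr}\,\lap(\ball{h}{v})$, and a vertex pulled into the ball can add more. In a sparse ball the trace $2|E(\ball{h}{v})|$ is itself small, so the relative perturbation $2/\mathrm{tr}$ is large. We would then show that once $2k\ge C(q,d)$ and the ceilings satisfy $c_i\ge q_i$, the sequence $\lambda'=\max(q,d)$ used in the proof of Theorem~\ref{thm:tight} lies in $\mathcal{R}(d,k)$. The ceilings grow by at least $2$ per step at index $1$ and, through the composed recursion, at every index after enough steps. By Theorem~\ref{thm:tight}, no safe rule in this information class can then prune. The loss occurs after $k=O(C)=O(|E(Q\text{-ball})|)$ touches, and empirically after about $C/5$ touches, matching \S\ref{sec:collapse}.

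The main obstacle is (B)'s ceiling condition. The trace budget alone is clearly exhausted, but the shift ceilings $c_i$ could in principle keep blocking a large $q_i$ for more steps. We would first try to bound the number of steps until $c_i\ge q_i$ for all $i$ using only $q_i\le n_Q$. The term $d_{i-j}+2(k-j)$ at $j=0$ reaches $n_Q$ after at most $n_Q/2$ steps, so this delay is also $O(1)$ in query size. The observation is a structural explanation rather than a theorem, so we would present (A) and (B) as propositions. The quantitative ``5 mass per update'' figure stays as the empirical constant from \S\ref{sec:collapse}.
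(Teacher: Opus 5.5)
Your proposal has genuine gaps in both halves, and the link between them is missing. Note first that the paper never proves the observation. It supports it empirically, and the decisive premise your chain needs is exactly the part the paper measures.

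\textbf{Part (A) overclaims.} Your hub bullet controls only $\eig{1}$, so it cannot support ``rich balls dominate every query spectrum.'' Your restated (A) is in fact false. The star $K_{1,m}$ with $m\ge n_Q$ contains a hub, yet its spectrum is $(m+1,1,\dots,1,0)$. Any query ball containing a triangle has $q_2\ge 3$, so it prunes this star on component~$2$. This is why the paper's hub exclusion (Theorem~\ref{thm:hub}) is stated per component, requiring $d_t\ge q_{\max}+t-1$ via Brouwer--Haemers. The degree-based route to silencing the top-$r$ test therefore uses a run of $r$ high-degree vertices, not one hub. Grone--Merris--Bai also points the wrong way for (A). Majorization by the conjugate degree sequence \emph{upper}-bounds the data-side partial sums. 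So it says domination \emph{requires} degree mass, not that degree mass \emph{forces} domination. Only your clique bullet, or per-component lower bounds such as $\eig{1}\ge\Delta+1$ and Brouwer--Haemers, give the direction you need.

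\textbf{Part (B) has a faulty step and a missing premise.}

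The ceiling step fails as written. $c_i$ is a \emph{minimum} over $j$, so the $j=0$ term reaching $n_Q$ proves nothing. For $k\le i-1$ the $j=k$ term caps $c_i$ at $d_{i-k}$, which grows only through the stale spectrum's own gaps. The repair is short. For $k\ge i-1$, every term is at least $d_i+2(k-i+1)$, because $d_{i-j}\ge d_i$. Hence $\max(q,d)\in\mathcal{R}(d,k)$ once $k\ge r-1+\lceil C(q,d)/2\rceil$. The ceilings, like the trace budget, are exhausted on the scale of the margin itself. Strictly, this rules out rules safe over $\mathcal{R}$, not every safe rule, per the caveat after Theorem~\ref{thm:tight}.

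The missing premise is that $C$ is small. Nothing in (A) gives it, and sparsity pushes the other way. Against a fixed query, removing edges can only lower every $d_t$, and so \emph{raises} $C$; an edgeless ball has $C=\sum_{t\le r}q_t$. So $2/\mathrm{tr}$ is the wrong ratio: how long a prune survives is governed by $C/2$. Your chain therefore only yields collapse after $O(|E(Q[\ball{h}{u}])|)$ touches, which is far from a handful for dense $16$--$32$-vertex queries.

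The paper never derives small margins; it measures them. The prunes beyond NLF and size screening are near-miss candidates, and their margins concentrate in $[1,8]$, while a touching update injects $\approx 5$ mass (\S\ref{sec:collapse}). Theorem~\ref{thm:hub} and the ball-size decay of pruning frequency support the ``rich ones never prune'' half. To make your version sound, take that measured margin distribution as an explicit empirical premise and combine it with the corrected bound $k\ge r-1+\lceil C/2\rceil$.
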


\section{Selective Exact Maintenance}
\label{sec:selspec}

Observation~\ref{obs:tension} has a constructive reading: in the useful
regime, neighborhoods are small, so \emph{exact} recomputation is cheap.
The design question becomes \emph{where} exactness must be paid.

\subsection{The Anti-Correlation Law and Hub Exclusion}

\begin{theorem}[Hub exclusion, top-$t$ form]
\label{thm:hub}
Let $d_t$ denote the $t$-th largest degree within $G[\ball{h}{v}]$ and
$q_{\max}=\max_Q|V_Q|$ over registered queries. If
$d_t \ge q_{\max}+t-1$, then the component-$t$ test
$\lambda^Q_t(u)>\lambda^G_t(v)$ never fires for any registered query:
the Brouwer--Haemers bound $\eig{t}(\lap)\ge d_t-t+2$, valid under its
mild edge-count precondition~\cite{brouwerhaemers} (satisfied here,
since a vertex of degree $d_t\ge t$ supplies the required edges),
gives $\lambda^G_t(v)\ge q_{\max}+1$, while every query-ball eigenvalue
is at most its vertex count $\le q_{\max}$.
\end{theorem}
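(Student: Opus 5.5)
The plan is to sandwich the two sides of the component-$t$ test between constants that cannot cross: bound the data-side ball spectrum below by a degree quantity, bound the query-side ball spectrum above by a vertex count, and check that the hypothesis $d_t\ge q_{\max}+t-1$ separates them.

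\emph{Data side.} Apply the Brouwer--Haemers inequality $\eig{t}(\lap(H))\ge d_t(H)-t+2$ to $H=G[\ball{h}{v}]$, where $d_t(H)$ is the $t$-th largest degree of $H$. First verify the precondition of that result, which requires $H$ to have at least $t$ vertices and enough edges for the $t$-th degree to be realized. Since $d_t\ge q_{\max}+t-1\ge t$, the top $t$ vertices each have degree at least $t$, so $H$ has at least $t+1$ vertices and the required edges exist. One subtlety to check is that $d_t$ is measured inside the induced ball subgraph, not in $G$; the statement defines it that way, so the Laplacian is that of the same graph. Substituting the hypothesis gives
\[
\lambda^G_t(v)\ \ge\ (q_{\max}+t-1)-t+2 \;=\; q_{\max}+1.
\]

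\emph{Query side.} For any registered $Q$ and $u\in V_Q$, let $H_Q=Q[\ball{h}{u}\cap V_Q]$, with $n'=|V(H_Q)|\le|V_Q|\le q_{\max}$ vertices. Use the standard fact that every Laplacian eigenvalue of a graph on $n'$ vertices is at most $n'$. One proof: $\lap(H_Q)\preceq\lap(K_{n'})$ because adding edges adds PSD terms, as in the Preliminaries, and the spectrum of $\lap(K_{n'})$ is $\{n',\dots,n',0\}$. For $t>n'$ the padded eigenvalue is $0$, so in every case $\lambda^Q_t(u)\le q_{\max}<q_{\max}+1\le\lambda^G_t(v)$. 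Hence the strict inequality in the test never holds, for every $u$ and every registered query.

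The only real obstacle is stating the Brouwer--Haemers precondition precisely. If the cited version needs, say, $t$ to be at most the number of vertices, or excludes a specific small exceptional graph, I would argue directly that the $t$ vertices of degree at least $t$ force $|V(H)|\ge t+1$. Those vertices already supply at least $t^2/2$ edges. I expect no exceptional case to arise at degrees $\ge q_{\max}+t-1\ge t+1$.
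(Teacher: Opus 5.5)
Your proposal is correct and follows the paper's argument. Brouwer--Haemers applied to the induced ball $G[\ball{h}{v}]$ gives $\lambda^G_t(v)\ge q_{\max}+1$, and the vertex-count bound on query-ball eigenvalues, which you justify via $\lap(H_Q)\preceq\lap(K_{n'})$ plus zero padding, gives $\lambda^Q_t(u)\le q_{\max}$, with your discharge of the precondition through $d_t\ge t$ at the same level of detail as the paper's.
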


In particular ($t{=}1$) a single vertex of degree $\ge q_{\max}$ inside
the ball kills the leading component; empirically, a short run of
high-degree vertices kills the top components, which carry
$88$--$100\%$ of observed prunes.

Empirically the exclusion extends far beyond component~1: across an
Erd\H{o}s--R\'enyi family, a Watts--Strogatz family, and a real
collaboration network, the per-vertex pruning frequency against a dense
query workload decays monotonically with ball size---$0.90$, $0.60$,
$0.24$, $0.065$, $0.000$ for ball sizes $1\!-\!8$, $9\!-\!16$,
$17\!-\!32$, $33\!-\!64$, $\ge\!65$ on the real graph
(Figure~\ref{fig:probe}d)---while per-solve cost grows with ball size.
Utility and cost are anti-correlated: \emph{the most expensive vertices
to maintain are precisely those that never prune}. Component attribution
shows the top two eigenvalues fire $88\!-\!100\%$ of all prunes, so
$r\in\{2,4\}$ components suffice---two to four floats per deployed
vertex.

\subsection{The \selspec\ Index}
\label{sec:selspecindex}

\begin{figure}
\centering
\begin{tikzpicture}[font=\footnotesize, node distance=12mm,
  st/.style={draw, semithick, rounded corners=2pt, inner sep=4pt,
             align=center, text width=24mm, minimum height=9mm},
  ar/.style={-{Stealth[length=2.6mm]}, semithick}]
\node[st] (lazy) {undeployed\\(pass-through)};
\node[st, below=of lazy] (dep) {deployed\\exact $\lambda_{1..r}$};
\node[st, below=of dep] (ev) {evicted\\(pass-through)};
\draw[ar] (lazy) -- node[right=2mm, align=left]{first consult;\\lazy solve} (dep);
\draw[ar] (dep) -- node[right=2mm]{$|\ball{h}{v}|{>}S_{\max}$} (ev);
\draw[ar] (dep.south west) to[out=180, in=180, looseness=2.4]
  node[left=1mm, align=center]{touch:\\re-solve\\$+$ resync} (dep.north west);
\end{tikzpicture}
\caption{\selspec\ vertex lifecycle. Deployment is lazy (first
consult), refresh is eager on touch (Theorem~\ref{thm:complete}'s
resync), eviction is permanent and always safe.}
\label{fig:lifecycle}
\end{figure}

\selspec\ deploys spectra on the vertex set
$\{v: |\ball{h}{v}|\le S_{\max}\}$, intersected with label-relevant
candidates of registered queries. A vertex's lifecycle
(Figure~\ref{fig:lifecycle}) is: \emph{lazy
deploy} on first consult (its spectrum is computed when a query first
tests it, so build cost scales with consulted candidates, not $|V|$);
\emph{eager refresh} on every touching update while deployed;
\emph{evict} when the ball outgrows $S_{\max}$. Spectra are maintained
\emph{exactly}:
\begin{itemize}
\item \textbf{Trigger.} An update $(a,b)$ changes $G[\ball{h}{v}]$ iff
both endpoints lie in the (new) ball; a reverse ball-membership index
locates affected deployed vertices in $O(1)$ amortized.
\item \textbf{Recompute on touch.} Affected spectra are recomputed by a
dense symmetric eigensolve on $\le S_{\max}$ vertices (tens of
microseconds via LAPACK); vertices whose balls outgrow $S_{\max}$ are
\emph{evicted}: marked pass-through and dropped from the reverse index
at its next compaction. Eviction is permanent in our implementation---a
ball that later shrinks back below $S_{\max}$ is not re-deployed, which
forfeits pruning on that vertex but never completeness (pass-through
admits everything); re-deployment would only require re-running the
lazy first-consult path.
\item \textbf{Bidirectional resync.} An insertion can \emph{raise} an
affected vertex's spectrum, re-legalizing a previously pruned candidate;
affected vertices are therefore re-checked against all label-matching
query vertices in both directions. Deletions only lower true spectra,
so stale entries over-admit---never over-prune---and completeness is
unconditional.
\end{itemize}

\begin{theorem}[Completeness]
\label{thm:complete}
Under any update sequence, \selspec's candidate sets contain every
vertex participating in any embedding. \end{theorem}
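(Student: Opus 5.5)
The proof goes by induction over the update sequence. The invariant we maintain is the following. For every deployed vertex $v$ and every registered query vertex $u$ with $\ell(u)=\ell(v)$, if $v$ is currently pruned from $C(u)$, then some $t$ satisfies $\lambda^Q_t(u) > \lambda^G_t(v)$, where $\lambda^G$ is the \emph{current true} ball spectrum of $v$. Lemma~\ref{lem:safety} then says that no pruned vertex can be the image of $u$ under any embedding in the current graph. Every other exclusion from $C(u)$ comes from the baseline LDF/NLF filters, and since these are assumed complete, the union of both filters keeps every participating vertex. The base case is the state right after lazy deployment. There the spectrum is computed exactly at first consult, so any prune is a genuine interlacing violation for the graph at that moment.

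For the inductive step, fix an update $(a,b)$ and split the vertices into three cases. (i) Undeployed or evicted vertices are pass-through: they are never pruned by the spectral test, so they trivially satisfy the invariant. In particular eviction can only add candidates back, never remove them. (ii) Deployed vertices that are not affected. By the trigger rule, $G[\ball{h}{v}]$ changes iff both endpoints lie in the new ball, so for these vertices the induced ball subgraph is unchanged and so is the true spectrum. Their prune decisions therefore stay valid. (iii) Deployed affected vertices. The reverse ball-membership index returns exactly this set, and each is re-solved exactly and resynced bidirectionally against all label-matching query vertices. After that step, each such $v$ is pruned from $C(u)$ iff the exact current spectrum violates interlacing, so the invariant is restored. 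Deletions need no extra argument, because the resync recomputes the spectrum anyway. Even if an affected vertex were left stale, Laplacian monotonicity gives stale $\ge$ true, so a stale entry can only over-admit.

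The main obstacle is case (ii): arguing that the reverse index locates \emph{every} vertex whose ball's induced subgraph changed. An update can change ball membership itself, for example an insertion that shortens distances and pulls in a new vertex together with its $k>1$ incident edges (the same subtlety flagged for Flat Weyl). So the index must be keyed on the \emph{new} ball, and possibly on vertices newly entering it. I would first show that if $G[\ball{h}{v}]$ changes under the insertion of $(a,b)$, then $a$ and $b$ both lie in the new ball. Either the edge itself is induced, or a vertex enters the ball through a path that uses $(a,b)$, which places both endpoints within distance $h$ of $v$. For a deletion, the old ball contains both endpoints. The index must therefore answer membership with respect to the post-insertion ball and the pre-deletion ball, and I would prove this by checking how the index updates membership when balls grow. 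Lazy compaction of evicted entries is harmless, because it only causes spurious re-solves of pass-through vertices.

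Finally, since each update preserves the invariant and Lemma~\ref{lem:safety} gives soundness of each spectral prune, induction over the update sequence yields the theorem.
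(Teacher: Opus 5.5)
Your proposal is correct and follows the same route as the paper's sketch: exact spectra at pruning time for deployed vertices (lazy solve on first consult, eager re-solve on touch), pass-through for undeployed and evicted vertices, soundness of every prune via Lemma~\ref{lem:safety}, and bidirectional resync to undo rejections that an insertion reverses, which you organize as an explicit induction on an invariant. Your trigger-rule analysis fills in what the sketch takes for granted, and it is accurate: the paper's ``both endpoints in the (new) ball'' is right for insertions, for deletions it is the pre-deletion ball, and completeness under deletions survives anyway by monotone over-admission. The index step you leave open closes in one line: if $a$ lies in $v$'s post-insertion ball, a shortest path to $a$ either avoids the new edge (so $a$ was already in the old ball) or ends with it (so $b$ was within distance $h-1$), hence scanning $R_a\cup R_b$ finds every affected vertex.
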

\begin{proof}[Proof sketch]
Spectra consulted at pruning time are exact for deployed vertices
(eager refresh on touch; lazily computed on first consult), undeployed
vertices pass through, and the test is safe by Lemma~\ref{lem:safety};
the resync step restores any candidate whose spectrum change reverses a
past rejection.
\end{proof}

\begin{theorem}[Update cost]
\label{thm:cost}
Let $A$ be the set of deployed vertices whose induced ball subgraph is
changed by the update (both endpoints inside the new ball). Per edge
update, maintenance costs
$O\!\big(|A|\cdot(S_{\max}^{3} + S_{\max}\,\bar d)\big)$---one dense
eigensolve plus a capped-ball BFS re-extraction with average degree
$\bar d$ per affected vertex---independent of $|V|$.
\end{theorem}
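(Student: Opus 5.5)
The plan is to account for each step of the maintenance procedure in the order the update is processed, and to bound each step per affected vertex. The only operations are the ones listed in \S\ref{sec:selspecindex}: triggering, BFS re-extraction of the ball, the eigensolve, and the bidirectional resync. Summing these over $A$ gives the bound. The argument therefore reduces to three per-vertex bounds plus one bound for locating $A$.

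\textbf{Step 1 (locating $A$).} For an update $(a,b)$, I would use the reverse ball-membership index to intersect the deployed vertices whose balls contain $a$ with those whose balls contain $b$. This costs $O(1)$ amortized per returned vertex, so locating $A$ is $O(|A|)$ plus constant overhead. A vertex whose ball grows past $S_{\max}$ is detected during Step 2 and evicted, which costs only a flag flip; compaction is amortized.

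\textbf{Step 2 (re-extraction).} For each $v\in A$, I would run a BFS to depth $h$ that aborts as soon as $S_{\max}+1$ vertices have been discovered. This visits at most $S_{\max}$ vertices and scans their adjacency lists, giving $O(S_{\max}\bar d)$ work. Building the induced Laplacian on at most $S_{\max}$ vertices takes $O(S_{\max}^2)$, which is dominated by $S_{\max}^3$.

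\textbf{Step 3 (eigensolve).} A dense symmetric eigensolve on an $S_{\max}\times S_{\max}$ matrix costs $O(S_{\max}^3)$ by standard tridiagonalization plus QR.

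\textbf{Step 4 (resync).} Each refreshed $v$ is compared against the label-matching query vertices, using $r$ components per comparison. This costs $O(r\cdot|V_Q|)$, which is independent of $|V|$ and is treated as a constant in the query size.

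Summing Steps 1--4 over $v\in A$ yields $O(|A|(S_{\max}^3+S_{\max}\bar d))$. Every term depends only on $S_{\max}$, $\bar d$, $|A|$ and query parameters, never on $|V|$.

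The main obstacle is Step 2's dependence on $\bar d$. A ball vertex may itself be a hub with adjacency far larger than average, so the honest per-vertex BFS cost is the sum of degrees of the at most $S_{\max}$ visited vertices. I would handle this by reading $\bar d$ as the average degree over the capped ball, which is how the statement phrases it. I would also note that the BFS can stop scanning a list once the cap is hit, and that any hub with degree above $S_{\max}$ forces eviction immediately, since its whole neighborhood would otherwise enter the ball. A secondary point to state explicitly is the amortized bound on the reverse index: membership updates on ball changes are charged to the same BFS.
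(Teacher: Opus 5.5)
Your per-vertex accounting has the same decomposition the paper uses: one $O(S_{\max}^3)$ dense eigensolve plus a capped-ball BFS, with reverse-index re-registration charged to the recompute. The paper's version is the statement's own gloss plus Lemma~\ref{lem:rev}. Step~1, however, fails as stated. $A$ is defined by the \emph{new} ball, whereas $R_a$ and $R_b$ record pre-update membership. So $R_a\cap R_b$ misses every deployed $v$ with $d(v,a)\le h-1$ whose old ball did not contain $b$: the insertion pulls $b$ into the ball, and also its neighbors when $d(v,a)\le h-2$. The simplest instance is $v=a$ itself whenever $b$ was more than $h$ hops away.

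This is not only an accounting slip. Those vertices keep spectra that are now too low, so a later test can over-prune, breaking Theorem~\ref{thm:complete}. The right candidate set is the union. If both endpoints lie in the new ball, the one nearer to $v$ has a shortest path avoiding the new edge, so it was already in the old ball; hence $A\subseteq R_a\cup R_b$. Scanning both lists costs $O(|R_a|+|R_b|)$ reads, which is exactly what Lemma~\ref{lem:rev} charges, with self-compaction paying for dead entries. That term is not $O(|A|)$, and neither is an intersection, which already costs $\Omega(\min(|R_a|,|R_b|))$. The reason is that live entries of $R_a$ at distance exactly $h$ from $a$, whose balls do not reach $b$, are read but unaffected. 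So this step must be bounded separately, as the paper does, rather than as $O(1)$ per returned vertex.

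Your handling of the $\bar d$ obstacle also needs repair. A high-degree vertex forces eviction only if it lies within distance $h-1$ of $v$. On the boundary layer it stays in the ball without pulling in its neighbors, so scanning its list to recover induced edges is not capped. Reading $\bar d$ as the ball-average degree then lets the cost grow with $|V|$. For example, take a hub adjacent to many vertices $w_i$, each with a pendant leaf $\ell_i$: every $\ball{2}{\ell_i}$ has three vertices, one of them the hub.

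The clean fix has two parts. Scan only interior vertices' lists, each cut off after $S_{\max}$ entries by the abort rule. Find boundary--boundary edges by $O(S_{\max}^2)$ hashed adjacency probes. Both costs are absorbed by the $S_{\max}^3$ term, giving a worst-case per-vertex bound truly independent of $|V|$; the paper's $S_{\max}\bar d$ form is only an average-case statement. Your Step~4 resync term is a legitimate addition the paper leaves implicit, and treating it as a query-dependent constant is fine.
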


\begin{lemma}[Reverse-index maintenance]
\label{lem:rev}
The reverse ball-membership index stores, for each deployed vertex,
its $\le S_{\max}$ ball members ($O(D\cdot S_{\max})$ memory for $D$
deployed vertices); locating $A$ costs $O(|R_a|+|R_b|)$ list reads for
the endpoint lists $R_a,R_b$, which self-compact to live entries at
each access; and a recompute re-registers at most $S_{\max}$ entries,
so amortized maintenance is $O(|A|\cdot S_{\max})$ per update.
\end{lemma}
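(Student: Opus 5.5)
We will prove Lemma~\ref{lem:rev} by fixing a concrete data layout and counting operations. The layout is as follows. For each data vertex $w$ there is a list $R_w$ of (deployed vertex, epoch) pairs. Each deployed vertex $v$ carries a current epoch counter $e_v$ and its stored member set $M_v=\ball{h}{v}$ with $|M_v|\le S_{\max}$. An entry $(v,e)\in R_w$ is \emph{live} iff $v$ is still deployed, $e=e_v$, and $w\in M_v$. The invariant we maintain is that every live membership $w\in M_v$ appears as a live entry in $R_w$. Stale entries may also be present.

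The first step is the memory bound. Live entries number exactly $\sum_v |M_v|\le D\cdot S_{\max}$. Stale entries are charged to the registration that created them. Self-compaction discards a stale entry the first time its list is read. Hence a list never holds more stale entries than registrations made since its last access. The total is therefore $O(D\cdot S_{\max})$ plus un-compacted garbage, and that garbage is bounded by the amortized charge introduced below.

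The second step is locating $A$. By the trigger rule, $v\in A$ requires $a,b\in\ball{h}{v}$ in the new ball. Such a $v$ has a live entry in both $R_a$ and $R_b$, so $A\subseteq R_a\cap R_b$ among live entries. One pass over the shorter list, with $O(1)$ membership tests against a hashed $M_v$, identifies $A$. Compaction of both lists happens in the same pass. This costs $O(|R_a|+|R_b|)$ list reads.

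One subtlety needs care. An insertion can pull $a$ or $b$ into a ball for the first time, and then no entry exists yet. We handle this by also scanning $R_a$ for $v$ with $b\in M_v$ and $d(v,a)<h$, and symmetrically. These are the balls that may grow. They are already among the entries being read, so the bound is unchanged.

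The third step is the recompute. For each $v\in A$, we redo the capped BFS of Theorem~\ref{thm:cost} and increment $e_v$, which kills all old entries at once. We then push $(v,e_v)$ onto $R_w$ for each of the at most $S_{\max}$ new members. That is $\le S_{\max}$ registrations per affected vertex. If $|\ball{h}{v}|>S_{\max}$, we evict $v$ with no registrations.

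The last step, the amortization, is the main obstacle. The difficulty is that $|R_a|+|R_b|$ is not itself $O(|A|\cdot S_{\max})$. We use a potential equal to a constant times the number of stale entries. Each registration deposits one credit on the entry it creates, and this prepays that entry's eventual removal. Reading a stale entry therefore costs zero amortized time.

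Reading a live entry is harder. A live entry in $R_a$ whose owner $v$ is not in $A$ is pure overhead. For $w=a,b$, let $L_w$ denote the number of live entries in $R_w$. If the live lists can be long relative to $|A|$, the honest bound is $O(L_a+L_b+|A|\cdot S_{\max})$.

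I will first try to argue that these reads collapse into $A$. Every live $v$ with $a\in M_v$ and $b\in M_v$ is in $A$. A live $v$ in $R_a$ with $b$ outside its new ball is unaffected, and its read is the locating cost already stated separately in the lemma. So the lemma's claims should be read as two parts: a locating cost of $O(|R_a|+|R_b|)$, and a maintenance cost of $O(|A|\cdot S_{\max})$ amortized over registrations and compaction. I will prove them in exactly that split form. If a single combined bound is wanted, we would need degree-bounded list lengths.
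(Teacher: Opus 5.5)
The paper states this lemma without proof, so the benchmark is the direct accounting its wording implies. Your time analysis is that accounting and it is sound: epoch invalidation, one credit per registration paying for that entry's eventual discard, live reads of unaffected owners charged to the separately stated locating term, and at most $S_{\max}$ pushes per $v\in A$. Your closing two-part reading is exactly how the lemma is phrased.

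\textbf{The memory bound is not established.} You say the un-compacted garbage ``is bounded by the amortized charge,'' but that conflates time with space. Credits bound the total cost of eventually discarding stale entries, not how many exist at one moment, and in your layout that number is not $O(D\cdot S_{\max})$.

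Each recompute of $v$ pushes a fresh entry onto $R_w$ for \emph{every} member $w$, yet only the two endpoint lists are ever compacted. A member $w$ that is never an update endpoint therefore gains one stale entry per recompute of its owner, and its list is never read.
\begin{itemize}
\item With deletions, toggling one internal edge of $v$'s ball (insert, delete, insert, \dots) grows $R_w$ without bound.
\item Even without deletions, take a ball whose vertex set cannot grow. Inserting the $\Theta(S_{\max}^2)$ missing edges among half its members leaves $\Theta(S_{\max}^3)$ stale entries in the other half's lists, for a single deployed vertex.
\end{itemize}

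The missing ingredient is a global compaction:
\begin{itemize}
\item Keep a count of stale entries. It rises by $|M_v|$ at each epoch bump or eviction and falls at each discard.
\item Whenever this count exceeds the live count, sweep every non-empty list, or rebuild all lists from the $M_v$.
\item The sweep costs $O(\text{live}+\text{stale})=O(\text{stale})$. The deposited credits cover it with a constant of at least $2$.
\end{itemize}
Both time bounds are then preserved, and the index stays within roughly twice its live size, i.e.\ $O(D\cdot S_{\max})$.

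\textbf{The pull-in patch for locating $A$ has its condition inverted.} When an insertion pulls $b$ into $B_h(v)$, $v$ has no entry in $R_b$, as you note. But your patch selects owners in $R_a$ with $b\in M_v$ and $d(v,a)<h$, and those are already in the intersection. The condition must be $b\notin M_v$: equivalently, every live owner in $R_a$ with $d(v,a)\le h-1$, which requires storing BFS depths in $M_v$. Apply the same test symmetrically in $R_b$.

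As written, the patch misses exactly the pull-in vertices. Their $M_v$ goes stale and their stored spectra stay too low, so they can over-prune and break Theorem~\ref{thm:complete}. With the corrected test you must scan both lists in full rather than only the shorter one. That still fits $O(|R_a|+|R_b|)$.
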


Empirically, $|A|\approx 4.5$ on a real collaboration graph with the
end-to-end kernel near $313\mu s$ per update; the same machinery
costs ${\sim}0.5$ms per update on com-Amazon where nearly all vertices
are deployable, and the index (spectra plus reverse lists) reaches
${\sim}10\times$ the NLF index footprint there
(\S\ref{sec:exp:amazon})---the memory price of deployability.

On the real graph, $S_{\max}{=}32$ deploys $63.6\%$ of vertices and
retains $96.1\%$ of all pruning; $S_{\max}{=}64$ retains $100\%$
(Figure~\ref{fig:probe}d). \selspec\ is, to our knowledge, the first
dynamic-graph index maintaining exact local Laplacian spectra, and is
application-agnostic: any consumer of neighborhood spectra over a
dynamic graph (anomaly scoring, graph kernels, spectral features)
inherits the primitive.

\section{Filtering: Where Aggregates End}
\label{sec:filtering}

\subsection{Vertex-Level Filtering and the Weak-Candidate Effect}
\label{sec:weakcand}

Plugged into a decoupled CSM framework as an index layer over NLF,
\selspec\ removes $9\!-\!51\%$ of candidate vertices, growing with
query size, at unchanged match outputs (\S\ref{sec:experiments}).
Yet enumeration time does not improve, and the count of enumeration
intermediates is \emph{identical} to the spectral-free control at every
query size. The diagnosis is structural. The interlacing test fires on
candidates whose neighborhoods are spectrally \emph{poorer} than the
query's; but the delta enumeration is adjacency-guided---a candidate
contributes cost only if a partial embedding reaches it through
candidate edges, which requires local structural support. Spectrally
poor candidates lack precisely that support and die at the first
adjacency check. We call this the \emph{weak-candidate effect}:

\begin{observation}
\label{obs:weak}
In adjacency-guided delta enumeration, the prunable set of the
interlacing test and the cost-bearing set of the enumeration are nearly
disjoint. Where the test \emph{does} pay is in cost terms that scale
with candidate-set size: static pipelines pay a large per-query
candidate-space construction term, and engines in the
GraphQL/worst-case-optimal-join style~\cite{graphql,wcoj} additionally
scan or intersect
materialized candidate lists per level; CS-edge-guided engines
(CECI/DAF style~\cite{ceci,daf}) chase adjacency even in the static setting and their
exploration is as immune as delta enumeration's
(\S\ref{sec:exp:static}).
\end{observation}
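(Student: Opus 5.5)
Observation~\ref{obs:weak} is empirical, so I would not try to prove it in full generality. The plan is to prove a structural lemma that explains it, and then support the ``nearly disjoint'' claim with the intermediate-invariance measurement of \S\ref{sec:experiments}.

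\textbf{Formalizing the two sets.} Fix a query vertex $u$ and a candidate $v\in C(u)$. The \emph{prunable set} is $P_u=\{v:\exists t,\ \lambda^Q_t(u)>\lambda^G_t(v)\}$. The \emph{cost-bearing set} $W_u$ is the set of $v$ that receive an intermediate at depth $\ge 2$, beyond the first adjacency check. This means some partial embedding $g$ of a connected query prefix $Q_g\ni u$, built outward from the anchored update edge, maps $u\mapsto v$ and is extended at least one more level through candidate edges.

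\textbf{Key lemma (spectral support of reached candidates).} If $v\in W_u$, then $g$ embeds the sub-prefix $Q_g[\ball{h}{u}]$ into $G[\ball{h}{v}]$. By the interlacing argument behind Lemma~\ref{lem:safety}, this gives $\lambda^G_t(v)\ge\eig{t}\big(Q_g[\ball{h}{u}]\big)$ for all $t$. Hence $v$ can lie in $P_u\cap W_u$ only if the unmatched remainder $\ball{h}{u}\setminus V(Q_g)$ carries the whole spectral deficit $C(\lambda^Q(u),\lambda^G(v))$. That remainder consists of query vertices at distance $\le h$ from the matched frontier. The adjacency-guided extension therefore attempts exactly those vertices within the next $O(h)$ levels and fails unless $G[\ball{h}{v}]$ supplies their adjacencies. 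I would make this step precise by induction on the number of unmatched ball vertices. In the $t=1$ case, the Grone--Merris--Bai majorization bounds the top eigenvalue by the conjugate degree sequence, so the deficit already shows up as a degree shortfall that LDF/NLF or the first adjacency intersection detects.

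\textbf{Cost-term part, and where the difficulty lies.} For the second sentence I would compare cost models. Static and list-driven (GraphQL/WCOJ-style) engines pay $\Theta(\sum_u|C(u)|)$ for construction and list scans, so every pruned $v$ saves work. CS-edge-guided and delta engines only touch $v$ when $v\in W_u$, so by the lemma they save work only on $P_u\cap W_u$. The main obstacle is the word ``nearly'': the lemma shows that $P_u\cap W_u$ is confined to shallow depth, but it does not show that this intersection is small. Counterexamples exist, such as the radius-stratified workload. So I would state the quantitative claim conditionally on the depth distribution, and discharge it empirically by checking that the intermediate counts with and without the spectral test coincide.
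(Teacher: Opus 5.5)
Your empirical discharge and cost-model comparison are how the paper itself supports this observation. It relies on intermediate invariance, on the direct pruned-versus-visited intersection of \S\ref{sec:exp:engine2}, and on the static rebuild of \S\ref{sec:exp:static}. The paper deliberately keeps the weak-candidate mechanism informal (``a mechanism, not a guarantee'', \S\ref{sec:law}). The gap is in the key lemma you add on top.

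Its $t=1$ step uses Grone--Merris--Bai backwards. GMB bounds each graph's spectrum from above by \emph{its own} conjugate degree sequence, $\sum_{i\le k}\eig{i}\le\sum_{i\le k}d^*_i$; for $k=1$ this is merely $\eig{1}\le$ the number of non-isolated vertices. So $\lambda^Q_1(u)>\lambda^G_1(v)$ implies nothing about the degrees of either ball. The paper's $C_6$ versus $2{\times}C_3$ example (identical degrees, $\eig{1}=4$ versus $3$) is a spectral deficit with no degree deficit. Moreover, NLF sees only $v$'s one-hop label counts, not the degree sequence of $G[\ball{h}{v}]$. If spectral deficits were already caught by LDF/NLF, the spectral layer would remove nothing beyond NLF, which contradicts the measured $9$--$51\%$. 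Your other disjunct, that the first adjacency intersection catches them, is the very claim being proved. The paper invokes GMB only to bound the \emph{headroom} of interlacing over degree information (median $73\%$ of the ceiling), never to reduce one test to the other.

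The depth step fails as well. The contrapositive of Lemma~\ref{lem:safety} only says that a branch binding $u\mapsto v$ with $v\in P_u$ cannot survive once all of $\ball{h}{u}\cap V_Q$ is bound. When that happens is fixed by the matching order, not by $h$, and until then the branch can fan out over combinatorially many injective partial assignments. This is the escape hatch the paper names in \S\ref{sec:law} and builds in \S\ref{sec:exp:adversarial}. There the control performs $7{,}543{,}814$ intermediate extensions, almost all inside aggregate-deficient decoys. So no induction on unmatched ball vertices can give confinement, and shallow depth would not bound cost anyway.

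Demote the lemma to a heuristic and let the measurement carry the claim, stating its precondition. With a matching order derived from the query alone, the filtered run's search tree is a sub-tree of the control's. Equal intermediate counts then certify that no binding rejected at that moment was visited. Candidate-size-driven orders break this and move counts in both directions (\S\ref{sec:exp:method}). Finally, the paper quantifies ``nearly'' by an accumulated overlap of $0$--$7$ bindings per query. That overlap comes from resync flips over the stream, not from shallow visits of currently pruned candidates as your framing suggests.
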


\subsection{Anchored Gates: a CSM-Native Mechanism}
\label{sec:gate}

CSM offers a handle static matching lacks: every incremental match
contains the updated edge. If $(a,b)$ is matched to query edge $(u,u')$,
then for any $\rho$, the sub-query
$Q_\rho(u,u')=Q[\{w:\min(d_Q(w,u),d_Q(w,u'))\le\rho\}]$
must embed into $G[\ball{\rho}{\{a,b\}}]$. Any subgraph-monotone
invariant $\phi$ therefore yields a safe \emph{update-level} gate:

\begin{theorem}[Gate safety]
\label{thm:gate}
If for every label-compatible query edge $(u,u')$ there exist $\rho$ and
a subgraph-monotone $\phi$ with
$\phi(Q_\rho(u,u'))>\phi(G[\ball{\rho}{\{a,b\}}])$, then the update
$(a,b)$ creates no incremental match, and its delta enumeration can be
skipped entirely.
\end{theorem}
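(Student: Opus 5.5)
We argue by contraposition: we assume the update $(a,b)$ creates at least one incremental match and exhibit a label-compatible query edge $(u,u')$ for which, for every $\rho$ and every subgraph-monotone $\phi$, we have $\phi(Q_\rho(u,u'))\le\phi(G[\ball{\rho}{\{a,b\}}])$. This contradicts the hypothesis, so no incremental match exists and skipping the delta enumeration loses nothing. The argument mirrors the proof of Lemma~\ref{lem:safety}, with the single anchor vertex replaced by the anchor edge.

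\textbf{Step 1: locate the anchor.} Let $f$ be an incremental match created by the insertion of $(a,b)$, an embedding in the updated graph that is not an embedding before the update. Since the only change is the new edge, $f$ must use $(a,b)$ as the image of some query edge. So there is $(u,u')\in E_Q$ with $\{f(u),f(u')\}=\{a,b\}$, and we orient it so that $f(u)=a$ and $f(u')=b$. Because $f$ preserves labels, $(u,u')$ is label-compatible with $(a,b)$, so the hypothesis applies to this edge. For deletions, a destroyed match is an embedding of the pre-update graph that used $(a,b)$, and the same argument runs on that graph. We should state explicitly which graph state the gate evaluates: the post-insertion graph, or the pre-deletion graph.

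\textbf{Step 2: distance containment.} Let $w$ be any query vertex with $\min(d_Q(w,u),d_Q(w,u'))\le\rho$. Since $f$ maps query paths to data walks of the same length, $d_G(f(w),f(x))\le d_Q(w,x)$ for every $x$. Applying this with $x=u$ and $x=u'$ places $f(w)$ in $\ball{\rho}{\{a,b\}}$. Moreover $f$ is injective and edge-preserving, so its restriction embeds $Q_\rho(u,u')$ as a subgraph of the induced subgraph $G[\ball{\rho}{\{a,b\}}]$. Because $Q_\rho(u,u')$ is itself an induced subgraph of $Q$, its edges are a subset of $E_Q$ and are therefore all preserved.

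\textbf{Step 3: apply monotonicity.} Subgraph monotonicity of $\phi$, in the sense defined after Lemma~\ref{lem:safety} ("$H\subseteq G[S]$ implies $\phi(H)\le\phi(S)$", understood up to isomorphism), gives $\phi(Q_\rho(u,u'))\le\phi(G[\ball{\rho}{\{a,b\}}])$ for every $\rho$ and every such $\phi$. This is the required contradiction.

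The main obstacle is Step 1. We must make precise that every incremental match maps some query edge onto the updated edge, and that the quantifier "for every label-compatible query edge" covers both orientations of $(u,u')$. The orientation is harmless here because both $Q_\rho(u,u')$ and $\ball{\rho}{\{a,b\}}$ are symmetric in their two endpoints. Steps 2 and 3 are routine.
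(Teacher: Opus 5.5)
Your proposal is correct and follows essentially the paper's own argument, which is the paragraph preceding the theorem. Every incremental match maps some query edge onto the updated edge $(a,b)$; distance non-expansion then places the image of $Q_\rho(u,u')$ inside $G[\ball{\rho}{\{a,b\}}]$, subgraph monotonicity of $\phi$ rules out the strict inequality, and deletions go through the same way on the pre-deletion graph. Your contrapositive write-up makes explicit details the paper leaves implicit, including your worthwhile note that for insertions the ball must be taken in the post-update graph: taken in the pre-update graph the ball lacks the edge $(a,b)$, so, for example, an edge-count test could wrongly fire.
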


The symmetric statement holds for deletions: every \emph{disappearing}
match contains the deleted edge, and the same ball argument applies on
the pre-deletion graph, so a failing test there licenses skipping the
negative delta enumeration as well.

\begin{figure}
\centering
\begin{tikzpicture}[font=\scriptsize, scale=0.9,
  ar/.style={-{Stealth[length=2mm]}}]
\node at (-0.2,1.62) {$Q$};
\draw[thick] (0.6,1.2) -- (1.4,1.2);
\node[fill=black, circle, inner sep=1.2pt, label=below:{$u$}] at (0.6,1.2) {};
\node[fill=black, circle, inner sep=1.2pt, label=below:{$u'$}] at (1.4,1.2) {};
\foreach \p in {(0.2,1.8),(1.0,2.0),(1.8,1.8)} \node[fill=gray, circle, inner sep=1pt] at \p {};
\draw[gray] (0.6,1.2)--(0.2,1.8) (0.6,1.2)--(1.0,2.0) (1.4,1.2)--(1.0,2.0) (1.4,1.2)--(1.8,1.8) (0.2,1.8)--(1.0,2.0);
\draw[dashed, rounded corners] (-0.15,0.9) rectangle (2.15,2.3);
\node at (1.0,0.66) {$Q_\rho(u,u')$};
\node at (3.3,1.62) {$G$};
\draw[very thick] (4.1,1.2) -- (4.9,1.2);
\node[fill=black, circle, inner sep=1.2pt, label=below:{$a$}] at (4.1,1.2) {};
\node[fill=black, circle, inner sep=1.2pt, label=below:{$b$}] at (4.9,1.2) {};
\foreach \p in {(3.7,1.8),(4.5,2.0)} \node[fill=gray, circle, inner sep=1pt] at \p {};
\draw[gray] (4.1,1.2)--(3.7,1.8) (4.5,2.0)--(4.9,1.2);
\draw[dashed, rounded corners] (3.45,0.9) rectangle (5.55,2.3);
\node at (4.5,0.66) {$\ball{\rho}{\{a,b\}}$};
\draw[ar] (2.35,1.6) -- node[above]{$\phi(Q_\rho)\!\le\!\phi(B_\rho)$?} (3.35,1.6);
\node[align=center] at (4.5,2.62) {update edge $(a,b)$};
\end{tikzpicture}
\caption{\gate: every incremental match maps $Q_\rho(u,u')$ into
$\ball{\rho}{\{a,b\}}$; a subgraph-monotone invariant $\phi$ failing
for every label-compatible query edge licenses skipping the whole
delta enumeration (Theorem~\ref{thm:gate}).}
\label{fig:gate}
\end{figure}
We instantiate $\phi$ with vertex count, edge count, top-$r$ Laplacian
eigenvalues, and triangle count ($\mathrm{tr}(A^3)/6$), evaluated at
$\rho\in\{1,2\}$ with a ball-size cap (capped balls are rich and pass
for free, by the anti-correlation law). Sub-query invariants are
precomputed at registration; the data ball is assembled per update in
microseconds. Crucially, $Q_\rho$ of a dense query is far richer than
any single-vertex neighborhood, so the gate's discriminating power
exceeds vertex-level tests by construction, and triangle counts add a
\emph{cycle-mass} dimension invisible to degree information.

\subsection{The Boundary Regularity}
\label{sec:law}

The gates fire on $5\!-\!47\%$ of updates with zero wrong skips---and
the enumeration intermediates remain exactly unchanged, a fourth time
for the Laplacian gate and a fifth for the triangle gate
(\S\ref{sec:experiments}). The updates that pass the gate are exactly
the expensive ones. Together with \S\ref{sec:weakcand}, five
configurations spanning two granularities (vertex, update) and two
invariant families (degree mass, cycle mass) replicate one phenomenon,
which we state as an \emph{empirical regularity} backed by a mechanism
explanation---not as a theorem:

\begin{quote}
\emph{In label/NLF-saturated, update-driven CSM, subgraph-monotone
aggregate tests pruned only work that adjacency-guided enumeration
discarded within its first expansion levels---in every configuration
we measured.}
\end{quote}

Two arguments explain the observations, with different reach.
\emph{(i) Degree mass is largely spoken for.} By Grone--Merris (proved
by Bai)~\cite{gronemerris,bai2011}, the Laplacian spectrum is majorized
by the conjugate degree sequence, which \emph{bounds the headroom} of
top-eigenvalue interlacing over degree-based filters such as NLF. The
bound is not exhaustive: spectra do carry information beyond
degrees---$C_6$ and $2{\times}C_3$ share a degree sequence yet differ
in $\lambda_1$ ($4$ vs.\ $3$)---and that cycle-structure residue is
precisely why spectral signals are attractive at all. GMB explains
attenuation, not absence---and the attenuation is quantifiable: over
the $3{,}506$ deployed balls of ca-GrQc, the top-$4$ Laplacian partial
sum reaches a median $73\%$ of its conjugate-degree-sequence ceiling
(IQR $59$--$91\%$), so most of what the interlacing statistic measures
on this workload is degree-determined, with a real but minority
cycle-structure residue. \emph{(ii) Aggregate deficiency tends to be
self-punishing.} A region poor in the tested aggregate lacks raw
structural material, and adjacency-guided search starved of material
tends to halt within its first expansion levels; expensive failures, in
everything we measured, arose where material abounds but no
\emph{injective, label-consistent assignment} exists---a property no
multiset aggregate of the region can witness. This is a mechanism, not
a guarantee. Configurations escaping it are conceivable: a region
abundant within radius~1 yet deficient at radius~2 can host
combinatorially many injective partial assignments before exhausting,
which a $\rho{=}2$ gate would skip at real savings; engines that select
matching orders from candidate-set sizes couple candidate reductions to
exploration (\S\ref{sec:exp:method}); and engines scanning materialized
candidate lists pay per-candidate costs that pruning does reduce
(\S\ref{sec:exp:static}). We did not encounter the first configuration
in any natural workload; we delimit rather than universalize.

The regularity delimits, rather than condemns, the machinery: consumers
that are \emph{not} adjacency-guided---periodic full re-enumeration,
batch candidate-space construction (the static setting, where
PILOS~\cite{pilos} demonstrably gains), admission control for
approximate matching---retain the full benefit of \selspec-maintained
spectra and \gate{}s, with Theorems~\ref{thm:complete}
and~\ref{thm:gate} intact.

\section{Experiments}
\label{sec:experiments}

Our evaluation has two layers. \emph{Probes} (Python/LAPACK) isolate
each mechanism's intrinsic behavior; \emph{framework experiments}
(C++17, single-threaded) integrate \selspec\ and the gates into the
decoupled CSM benchmark of Gou et al.~\cite{csmexperiments} as a drop-in
index (\texttt{index\_type~6}), against the strictest control: the
identical pipeline with the identical NLF index minus the spectral
layer (NLFI), with the enumeration engine fixed (NewSP-style). This
isolates the tested signal and respects the implementation-sensitivity
lesson of~\cite{deltacompile}. Data: synthetic Erd\H{o}s--R\'enyi
(avg.\ degree 3) and Watts--Strogatz families; real graphs ca-GrQc
(4.2K vertices, collaboration) and email-Enron (33.7K vertices,
communication) and com-Amazon (334.9K)~\cite{snap}, with uniform labels
$L\in\{4,8\}$ and a $90{:}10$ initial/insert-stream split; update
streams are uniformly shuffled (these graphs carry no timestamps), the
standard protocol for untimestamped data~\cite{sun2022indepth}. Queries are
dense BFS-ball samples following standard CSM
practice~\cite{sun2022indepth}: a \emph{main} workload of sizes
$16$--$32$ (five per size) and a \emph{thickening} workload of sizes
$8$/$12$ (ten per size) whose easier instances mostly solve within
budget. Eigensolves use LAPACK \texttt{dsyev}; $h{=}2$, $r\in\{4,8\}$,
$S_{\max}{=}64$ unless noted. All experiments run single-threaded on an
AMD Ryzen Threadripper PRO 3945WX (12 cores, 64\,GB RAM) under
Ubuntu~24.04, GCC~13.3 at \texttt{-O3}. The per-query time budget is
$60$s throughout, covering stream processing only and checked between
updates---a query is stopped after the update that crosses the line,
so index maintenance within that update (and per-query graph loading,
excluded from the budget) can push wall times past $60$s; key results
are re-verified at $5\times$ and $10\times$ budgets. Table~\ref{tab:datasets} lists the graphs; probe experiments
(\S\ref{sec:exp:probes}) use $n{=}3{,}000$--$8{,}000$ synthetics and
ca-GrQc. All counters are deterministic. Wall times are single-run; a
three-repeat check on the thickening workload bounds run-to-run spread
at $1.1\%$ (control) and $1.4\%$ (spectral) of the median, an order of
magnitude below every time difference we interpret. We define the
central metric here: an enumeration \emph{intermediate} is one binding
of a data vertex to a non-anchor query vertex followed by a descent of
the backtracking search (the counter incremented at each recursive
extension; engines count at different sites, which is immaterial since
all comparisons are within-engine); raw totals are reported on solved
queries.

\begin{table}
\caption{Graphs used in framework experiments.}
\label{tab:datasets}
\small
\begin{tabular}{lrrrl}
\toprule
Graph & $|V|$ & $|E|$ & avg deg & type \\
\midrule
ca-GrQc & 4.2K & 13.4K & 6.5 & collaboration \\
email-Enron & 33.7K & 180.8K & 10.7 & communication \\
com-Amazon & 334.9K & 925.9K & 5.5 & co-purchase \\
com-Youtube & 1.13M & 2.99M & 5.3 & social \\
ER / WS (probes) & 3K & 4.5--6K & 3--4 & synthetic \\
\bottomrule
\end{tabular}
\end{table}

\subsection{Probe Results}
\label{sec:exp:probes}

\begin{figure*}[t]
\centering
\includegraphics[width=0.98\textwidth]{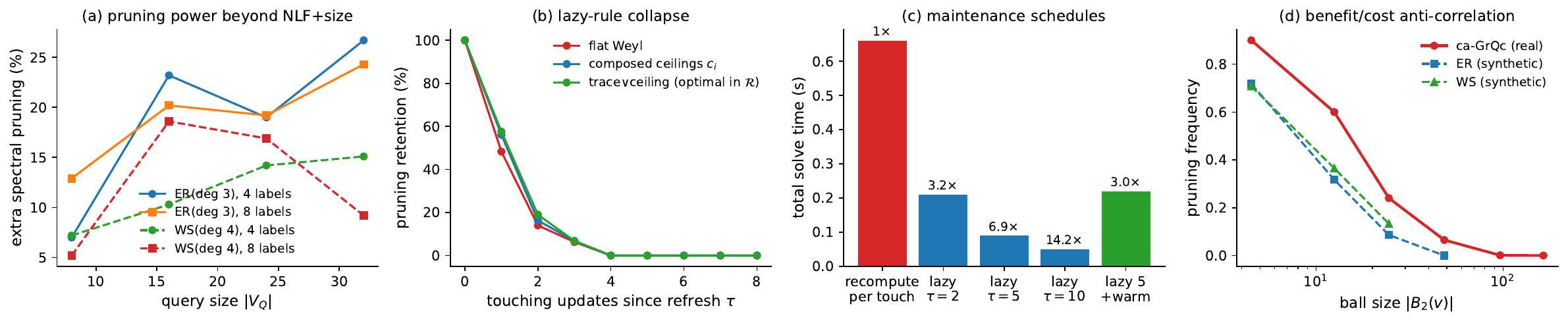}
\caption{Feasibility probes. (a)~Spectral pruning power beyond NLF+size
grows with query size on sparse graphs. (b)~Lazy-rule pruning retention
collapses within $\tau\le 4$ touching updates for flat Weyl, the
composed ceilings $c_i$, and the trace$\vee$ceiling rule, which is
optimal in the perturbation relaxation (Theorem~\ref{thm:tight}).
(c)~Maintenance cost scales as $(\tau{+}1)\times$; warm-started
iterative solvers lose to direct solves at deployment-scale
neighborhoods. (d)~The benefit/cost anti-correlation on two synthetic
families and a real collaboration network.}
\label{fig:probe}
\end{figure*}

Figure~\ref{fig:probe} summarizes.
\textbf{(a) Pruning power exists and grows with query size.} On sparse
graphs with dense queries, exact interlacing removes $7$--$27\%$ of
candidates beyond NLF and size screening (q32, ER: $26.7\%$),
replicating the magnitude PILOS reports statically; on tree-like
random-walk queries it removes ${\approx}0\%$---the applicability
domain is sharp.
\textbf{(b) Lazy bounds collapse} (\S\ref{sec:collapse}): retention
$54$/$56$/$58\%$ at $\tau{=}1$, ${\le}19\%$ at $\tau{=}2$, $0\%$ at
$\tau{=}4$ for flat Weyl, the composed ceilings $c_i$, \emph{and} the
optimal trace$\vee$ceiling rule; with naive per-update (rather than
per-edge) accounting we additionally observed genuine safety
violations, confirming the accounting requirement of
\S\ref{sec:lazyfamilies}.
\textbf{(c) Maintenance.} Lazy scheduling saves only $(\tau{+}1)\times$,
moot given (b); selective exact recomputation costs $52$--$225\mu s$
per solve in the deployment range, and with the exact trigger rule
(both endpoints inside the ball; $4.55$ true triggers per update vs.\
$20.3$ under the naive superset rule) the end-to-end kernel cost is
$313\mu s$ per update on ca-GrQc.
\textbf{(d) Anti-correlation law} on three graph families; on ca-GrQc,
$S_{\max}{=}32$ covers $63.6\%$ of vertices and retains $96.1\%$ of
pruning, $S_{\max}{=}64$ retains $100\%$; the top two eigenvalues fire
$88$--$100\%$ of prunes.

\subsection{The Boundary Regularity In-Framework}
\label{sec:exp:law}

\begin{table}
\caption{Replication matrix for intermediate invariance. Every cell is
a configuration in which all co-solved queries had exactly equal match
totals and intermediates; ``solved'' counts queries solved by both
sides.}
\label{tab:matrix}
\small
\begin{tabular}{llllr}
\toprule
Test & Engine & Graph & Stream & solved \\
\midrule
vertex (q16--32) & NewSP & GrQc & ins & 3 \\
vertex+gates (q8/12) & NewSP & GrQc & ins & 18 \\
vertex+gates (q8/12) & SymBi & GrQc & ins & 20 \\
vertex+gates & NewSP & GrQc & 70:30 & 2 \\
vertex+gates (Zipf) & NewSP & GrQc & ins & 5 \\
vertex+gates (sparse) & NewSP & GrQc & ins & 9 \\
vertex+gates & NewSP & Enron & ins & 13 \\
vertex+gates & NewSP & Amazon & ins & 8 \\
gate-only (lazy) & NewSP & Amazon & ins & 10 \\
\bottomrule
\end{tabular}
\par\smallskip
\footnotesize Gate-bearing rows are equal up to skipped doomed
bindings: $0$--$1$ per query where raw counters were captured on
Enron, $1$--$8$ on the com-Amazon gate runs ($\le 2.2\%$ of that
query's bindings, \S\ref{sec:exp:amazon}); vertex-only rows are exact.
\end{table}

For context beyond the NLFI control, we also ran the framework's full
index family (NI, DCG, DCS, CaLiG) on the same workload with the
expected relative behavior.\footnote{CaLiG's kernel-and-shell search is
sensitive to this dense low-selectivity workload (one query spent
$1{,}497$s inside a single uninterruptible update enumeration); we
report its seven queries completing within a 90-minute envelope. None
of this affects the controlled comparison, which varies only the
spectral layer.} We do not compare against DIVINE~\cite{divine}
experimentally: its dominance-embedding synopses are coupled to its own
engine rather than exposed behind the decoupled framework's index
interface, so any end-to-end gap would conflate
signal with implementation---the confound the framework exists to
remove~\cite{deltacompile}. Our lens applies to its signal nonetheless
(\S\ref{sec:related}).

Table~\ref{tab:invariance} is the paper's central table. Five
configurations---vertex-level interlacing at three query sizes, the
Laplacian anchored gate, and the triangle-extended gate---all prune
substantially by their own meter, never violate completeness (match
totals identical to NLFI on all $20$ solved queries, ranging from $1$
to $2.8\times 10^{8}$ matches), and leave the per-update
intermediate-result count \emph{exactly equal}. The gates' fired
updates are exactly those whose enumerations were already free: on the
hardest solved query (${\sim}23$s), zero of eight processed updates
fired any gate. Maintenance overhead, not enumeration, accounts for the
entire residual time difference ($+7$--$31\%$ on solved queries), and
an overhead-reduction pass (reverse-index compaction, change-gated
resync; memory $-40\%$) left time unchanged---consistent with the law:
even at zero overhead there is no enumeration time to win.

\begin{table}
\caption{Intermediate invariance by test configuration (ca-GrQc,
$L{=}8$ unless noted; cross-engine/graph/stream replications in
Table~\ref{tab:matrix}). ``Prune'' is the configuration's own meter:
candidate-vertex reduction (vertex level) or gate fire rate among
processed updates (update level). Intermediates equal the NLFI control
in every configuration; match counts always agree.}
\label{tab:invariance}
\small
\setlength{\tabcolsep}{3pt}
\footnotesize
\begin{tabular}{@{}lllcc@{}}
\toprule
Test & Granul. & Prune & Interm. & Compl. \\
\midrule
Interlacing, q16 & vertex & $-9..{-}36\%$ & equal & \checkmark\\
Interlacing, q24 & vertex & $-23..{-}51\%$ & equal & \checkmark\\
Interlacing, q32 & vertex & $-10..{-}42\%$ & equal & \checkmark\\
Laplacian \gate & update & $5..45\%$ & equal & \checkmark\\
\;+ triangles & update & $7..47\%$ & equal & \checkmark\\
\bottomrule
\end{tabular}
\par\smallskip
\footnotesize ``Equal'' is exact for vertex-level rows; for gate rows,
raw totals can differ by the gate's skipped doomed bindings---$0$--$8$
per query on natural workloads ($\le 2.2\%$, typically $\ll 0.1\%$),
first-level bindings of matchless updates and thus precisely the
``work discarded within first expansion levels'' of the regularity.
Prune rates are measured on the \emph{main} workload
(dense queries, $|V_Q|\in\{16,24,32\}$, five per size). Invariance is
asserted on solved queries only (truncated runs progress differently):
the main workload solves two ($|V_Q|{=}24$) at the default budget and a
third ($|V_Q|{=}16$) at $10\times$---the latter with raw intermediates
exactly equal at $1.56\times 10^{9}$ bindings and $2.13\times 10^{9}$
matches, the strongest single datum in the paper---and a
\emph{thickening} workload of twenty easier dense queries
($|V_Q|\in\{8,12\}$, ten per size) solves eighteen more. Invariance is
further stable under $5\times$/$10\times$ budgets, across an
insert/delete-mixed stream (\S\ref{sec:exp:mixed}), under a second
enumeration engine on all twenty thickening queries
(\S\ref{sec:exp:engine2}), and on two further graphs
(\S\ref{sec:exp:enron}--\ref{sec:exp:amazon}).
\end{table}

\subsection{Cross-Graph Replication}
\label{sec:exp:enron}

On email-Enron ($33.7$K vertices, avg.\ degree $10.7$; $18{,}081$-insert
stream) the regularity replicates under its strongest single test: the
main-workload query solved within budget processes the \emph{entire}
update stream, the triangle-extended gate fires on $15.7\%$ of its
updates ($2{,}848$ skipped enumerations, $1{,}247$ triangle disproofs,
zero wrong skips---$121{,}157$ matches, identical to control), with
intermediates-per-update exactly equal ($611.857$). The thickening
workload adds twelve more solved queries on this graph: all twelve
match totals agree (up to $2.86\times 10^{9}$ matches on one query);
under raw counters, intermediates are exactly equal on ten (e.g.\
$6{,}801{,}976$ on both sides of the heaviest) and differ by a
\emph{single} gate-skipped doomed binding on two ($95{,}915$ vs.\
$95{,}916$, equal denominators)---the continuum of
\S\ref{sec:exp:amazon} at its natural floor. The
$+3.5$s residual on this query equals the measured per-update
maintenance-plus-gate cost times the stream length, confirming once
more that the entire time difference is overhead, none of it
enumeration. Candidate reductions are smaller here ($1$--$7\%$) than on
the sparse collaboration graph: with average degree $10.7$, balls are
larger, fewer vertices are deployable, and fewer are spectrally
weak---the anti-correlation law (\S\ref{sec:selspec}) predicting its
own diminishing domain.

\subsection{At Scale: the Overhead Verdict}
\label{sec:exp:amazon}

com-Amazon ($334.9$K vertices, avg.\ degree $5.5$; $92{,}587$-insert
stream) completes the picture, a further invariance replication with a
twist. Sparsity makes this the spectral test's best regime: vertex-level
pruning fires up to $415$K times per query and removes up to $51\%$ of
candidates. It is also the regime where nearly every vertex is
deployable, so every update refreshes ${\sim}5$ cached spectra: the
control completes all ten queries in $46$s total while the spectral
configuration needs over ten minutes---match counts identical on every
query both sides finish ($10{,}080=10{,}080$ on the largest), and
intermediates equal up to the gate's skipped doomed bindings, $1$--$8$
per query under raw counters (both gate-bearing configurations skip the
same updates; e.g.\ $359$ vs $351$ on one light query,
$19{,}670{,}525$ vs $19{,}670{,}524$ on the heaviest). Pruning at its
most active, enumeration gains at a few first-level bindings: this is
the boundary regularity priced out. Two design points separate the
mechanism from the engineering. The \emph{eager} configuration
(materialized spectral candidate maps require touch-time recompute and
bidirectional resync for completeness) pays ${\sim}0.5$ms per update
and accounts for the ten minutes. A \emph{lazy gate-only}
configuration---drop the vertex layer entirely (for edge-view-driven
engines its read-path tests never reach exploration anyway,
\S\ref{sec:weakcand}) and keep the self-contained \gate---completes
all ten queries in $1$m$55$s with identical matches; raw intermediates
differ only by the gate's skipped doomed bindings, $1$--$8$ per query
($19{,}670{,}524$ vs.\ $19{,}670{,}525$ on the largest)---the
continuum's natural end, against the constructed workload's
$-99.9\%$ (\S\ref{sec:exp:adversarial}).
So ``not yet engineered well'' is partly true of the eager design: a
$5\times$ overhead reduction exists inside the design space. What no
engineering can change is the numerator: with intermediates provably
untouched, the best achievable online outcome is the control's
runtime plus a nonzero gate cost---still a strict loss on every natural
workload we measured (\S\ref{sec:exp:adversarial} shows the constructed
exception).

At com-Youtube scale ($1.13$M vertices, $2.99$M edges,
$298{,}762$-insert stream) the design split repeats. The eager
configuration is feasible (init $+2$--$14$s, index $25$--$39$MB) but
solves \emph{neither} of the control's two in-budget queries ($32.4$s,
$11.9$s)---its maintenance overhead alone converts both into timeouts.
The lazy gate-only configuration recovers both, with identical match
totals ($2.08\times 10^{8}$ and $1.59\times 10^{8}$) at $50.9$s and
$38.6$s---the residual being pure gate cost over a $299$K-update
stream---and its total wall time over the twenty-query thickening
workload sits within $4\%$ of the control's, the other eighteen queries
timing out on both sides. Million-scale, then, confirms both halves:
overhead engineering matters (eager loses solved queries; lazy keeps
them), and no engineering recovers a win (the lazy variant still pays
the gate for intermediates that never change).

\subsection{The Positive Boundary: Where Aggregates Do Pay}
\label{sec:exp:static}

We rebuilt the static pipeline---candidate generation, CECI-style
candidate-space (CS) construction, full backtracking enumeration---and
compared NLF against NLF+spectral candidates on ER and ca-GrQc with
dense q16 queries. CS construction cost, metered
implementation-independently as adjacency-scan operations, drops in
near proportion to candidates ($-2$ to $-43\%$, tracking the $-4$ to
$-38\%$ candidate reductions); match outputs agree exactly on every
uncapped query. Enumeration \emph{nodes}, however, remain identical
even here: our enumerator draws its pools from CS edges and is thus
adjacency-guided too. This sharpens the boundary regularity into an
architectural statement:

\begin{quote}
\emph{Aggregate pruning accelerates exactly the cost terms that scale
with candidate-set size (candidate-space construction, candidate-list
intersection), and never adjacency-guided exploration, in static and
dynamic settings alike.}
\end{quote}

Static matchers pay a large construction term per query, which is where
PILOS's measured gains live; delta enumeration pays no construction
term at all (its index is maintained incrementally), leaving aggregates
nothing to accelerate. The practical guidance follows: spend aggregate
tests where candidate sets are \emph{materialized or scanned}
(batch/static evaluation, index (re)builds, admission control), not
where exploration is pointer-chasing through adjacency.

\subsection{A Constructed Exception: the Instrument Can Detect
Non-Invariance}
\label{sec:exp:adversarial}

The regularity's mechanism (\S\ref{sec:law}) names its own escape
hatch---a region abundant at radius~1 yet deficient at radius~2---and
a fair objection is that an instrument that has only ever reported
equality might be unable to report anything else. We therefore
constructed the escape hatch. Query: a near-clique core ($K_8$ minus
one edge) with a two-hop tail, $10$ vertices, maximum degree $7$.
Data: $50$ decoy regions (isolated $K_8$, each missing one edge whose
arrival is the update) matching the query's degree profile
exactly---NLF admits every decoy vertex for every query vertex, and the
control must discover failure by exhausting injectivity ($10$ query
vertices, $8$ decoy vertices) through deep partial assignments; plus
one true region with the tail, completed by the final update.
Result: the control performs $7{,}543{,}814$ intermediate extensions in
$8.98$s; \gate's vertex-count screen ($|\ball{2}{\{a,b\}}|=8 <
|Q_2|=9$) disproves every mapping on all $50$ decoy updates and passes
the true one, yielding $7{,}942$ intermediates ($-99.9\%$) in
$0.012$s ($748\times$), with identical match totals ($720$). The
instrument detects non-invariance decisively when exploration crosses
an aggregate-visible boundary; on natural workloads it never does---
that contrast \emph{is} the finding.

\subsection{Robustness: Label Skew, Sparse Queries, Borderline Budgets}
\label{sec:exp:robust}

Under Zipf-distributed labels ($s{=}1.5$), five of ten queries solve
and all five agree exactly in totals and intermediates. (All label
assignments in this paper are synthetic, per standard practice for
unlabeled SNAP graphs; replaying natively labeled benchmark streams
such as LSBench/Netflow~\cite{sun2022indepth,csmexperiments} is the
immediate external-validity extension.) Under sparse
random-walk queries---the regime where the probe predicts near-zero
spectral pruning---candidate reductions indeed shrink to
$0.5$--$4\%$, nine of ten queries agree exactly, and the domain
boundary closes from the inside: where the tests cannot fire, nothing
changes by construction. One borderline case is instructive: a query
the control finishes at $59.5$s (of a $60$s budget) is pushed past the
line by the spectral configuration's maintenance overhead---it finds
the identical $8.45\times 10^{8}$ matches but lands at $60.8$s.
Overhead does not merely fail to help; near budget boundaries it can
flip outcomes.

\subsection{Deletion-Mixed Streams}
\label{sec:exp:mixed}

Deletions are the easy direction for safety---they only lower true
spectra, so stale entries over-admit and never over-prune
(\S\ref{sec:selspecindex})---but they exercise the bidirectional resync
and eviction machinery. We extended the framework's driver to
$70{:}30$ insert/delete streams (a patch we release with our artifacts)
and re-ran the ca-GrQc workload: all configurations complete, solved
queries produce identical outputs and identical intermediates across
control and spectral runs, and no safety anomaly appears in
$5{,}750$ exercised deletions, consistent with the over-admit argument.
In these runs \gate\ is applied to insertions only; the pre-deletion
form licensed by Theorem~\ref{thm:gate}'s symmetric statement is left
unexercised.

\subsection{A Second Engine, and the Pruned-Visited Intersection}
\label{sec:exp:engine2}

Two checks address the most natural objections. First, the framework's
decoupling lets us swap the enumeration engine: under the SymBi-style
engine (a different search process with its own intermediate counter),
all $20$ thickening queries solve on both sides, all $20$ match totals
agree---up to $1.5\times 10^{9}$ matches on a single query---and all
$20$ per-engine intermediate counts are again pairwise identical.
The regularity is not an artifact of one search process.
Second, we measured the claim behind the regularity \emph{directly}
rather than through counter equality: instrumenting the engine to
record every (query-vertex, data-vertex) binding it performs and the
index to record every binding the spectral test rejects, accumulated
over the whole stream. The overlap is $0$--$7$ bindings per query
(${\approx}0.5\%$ of visited bindings; exactly $0$ on $11$ of $20$
queries). The residue is not pruned-yet-visited leakage: at any moment
enumeration draws only from the candidate map, which excludes
currently-rejected bindings by construction; the overlapping bindings
are those whose status legitimately \emph{flipped} during the
stream---rejected at one point, re-admitted by the bidirectional resync
when an insertion raised the candidate's spectrum
(\S\ref{sec:selspecindex}), and visited only while admitted. The
accumulated overlap thus measures the resync mechanism at work, and its
sparseness shows how rarely the spectral frontier moves across
enumeration-relevant bindings.

\subsection{Methodology: the Intermediate-Invariance Test}
\label{sec:exp:method}

Candidate counts misled in both directions in our study: they
\emph{overstate} usefulness at the vertex level
(Table~\ref{tab:invariance}) and can \emph{understate} it for
edge-view-rich indexes~\cite{csmexperiments}. We propose reporting, for
any new CSM filter, the intermediate count with the filter on and off
under a fixed engine \emph{and fixed matching order}: a filter that
does not move this number cannot move enumeration time, no matter what
it does to candidates. Two qualifications. (i)~The test is
\emph{engine- and order-relative}: our primary engines derive matching
orders from the query graph alone (verified in the order-generation
code), so candidate reductions cannot feed back into exploration. We
probed the feedback channel directly by switching to a
candidate-cardinality-driven order (greedy smallest-candidate-first
from the anchor) and re-running the thickening workload: match totals
stay identical, but intermediates now move \emph{in both directions}---
one query inflates from $5.4\times 10^{7}$ to $1.2\times 10^{8}$
bindings ($17.8$s${\to}41.8$s) while another drops $32\%$---because the
spectral layer's candidate reductions reorder exploration
unpredictably. Under order feedback the invariance test certifies the
filter's direct effect only; whole-system claims additionally need the
order held fixed, and practitioners adding any filter to a
candidate-driven-order system should expect bidirectional, workload-
dependent swings unrelated to the filter's pruning quality.
(ii)~For a deterministic engine the equality itself is expected
whenever pruned candidates are never visited; the informative
measurements are the fire rates alongside the invariance, and the
direct intersection of \S\ref{sec:exp:engine2}.

\section{Related Work}
\label{sec:related}

\paragraph{CSM systems.} From early active-graph systems
\cite{graphflow}, four generations of index design---TurboFlux's
data-centric spanning tree~\cite{turboflux}, SymBi's bidirectional
dynamic candidate space~\cite{symbi}, RapidFlow~\cite{rapidflow},
CaLiG's candidate-lighting graph with kernel-and-shell
search~\cite{calig}, and NewSP's compatible-set reuse~\cite{newsp}---
each improved over its predecessor by one to three orders of
magnitude, all filtering with labels, degrees, and neighbor-label
multisets, refined by consistency propagation over candidate adjacency
\cite{csmsurvey}. DIVINE~\cite{divine} is the first to add a
non-structural signal (vertex-dominance embeddings over degree-grouped
star synopses, maintained incrementally). Its synopses are built from
degree-grouped star substructures---degree-sequence-adjacent
information in the sense of our GMB analysis---and our methodology
supplies the test any such signal should face: report intermediates
with the filter on and off under a fixed engine and order; unless the
prunable set intersects the cost-bearing set of adjacency-guided
enumeration, candidate reductions will not become time.

\paragraph{Experimental studies.} Sun et al.~\cite{sun2022indepth}
unified six CSM algorithms under incremental view maintenance and
identified scenario-dependent index choice; the delta-compilation study
of Lee et al.~\cite{deltacompile} showed an optimized implementation of
an old method beats CaLiG by up to $48.6\times$, making
implementation-controlled comparison mandatory, echoing the
common-codebase discipline of earlier static-matching
studies~\cite{subgraphcompare2012}---our evaluation
therefore swaps only the index layer inside the decoupled framework of
Gou et al.~\cite{csmexperiments}, whose caveat that candidate counts can
\emph{understate} filtering power we extend to the converse direction
with the intermediate-invariance test. The SIGMOD'24 survey of static
matching~\cite{sigmod2024survey} evaluates 534 combinations and finds
enumeration dominant; our law refines where that dominance does and does
not leave room for filtering.

\paragraph{Spectral and static matching.} PILOS~\cite{pilos}
introduced interlacing on neighborhood Laplacians for static matching,
building on candidate-space frameworks~\cite{ceci,inmemory2020,gup} and a
static-matching lineage scaling exploration to billion-node
graphs~\cite{billionnode2012}; its index is offline and static. We show its natural dynamic extensions are either
infeasible (lazy bounds) or ineffective for delta enumeration (the
boundary regularity), while its static gains are consistent with our
positive-boundary analysis. The Grone--Merris
conjecture~\cite{gronemerris}, proved by Bai~\cite{bai2011}, underpins
our explanation that Laplacian interlacing is largely a
degree-sequence statistic on matching workloads. Dynamic spectral
sparsifiers~\cite{dynsparsifier} maintain global spectral
approximations under updates, and incremental eigenpair tracking
follows leading eigenpairs of a single evolving
matrix~\cite{eigtrack}; maintaining exact local \emph{neighborhood}
spectra for all (deployable) vertices of a dynamic graph, as \selspec\
does, appears new to our knowledge.

\section{Conclusion}
\label{sec:conclusion}

We set out to bring spectral filtering to continuous subgraph matching
and instead mapped its boundary. Lazy maintenance is provably hopeless
in the useful regime; exact maintenance is cheap if deployed where an
anti-correlation law directs it; and the resulting pruning---however
safe, however large in candidate counts---cannot touch adjacency-guided
enumeration, a fact replicated across the eight configurations of Table 3 and
confirmed in our own static rebuild, where the same tests cut
candidate-space construction work by up to $43\%$ while leaving
exploration untouched. The regularity's final form is architectural: aggregate
invariants accelerate what scales with candidate sets, never what
chases adjacency. The positive residue is a reusable primitive (exact
dynamic local spectra), a safe update-level gating framework whose
yield lands wherever candidate sets are materialized, and a
methodological test (intermediate invariance) that we believe should
accompany every future claim that a new filter accelerates CSM.

\bibliographystyle{ACM-Reference-Format}
\bibliography{refs}

\end{document}